\documentclass[11pt]{article}

\usepackage{amsmath}
\usepackage{lipsum}
\usepackage{amsfonts}
\usepackage{graphicx}
\usepackage{epstopdf}
\usepackage{fullpage}
\usepackage{amsthm}
\usepackage{amssymb}
\usepackage{algorithm}
\usepackage{algpseudocode}
\usepackage{listings}
\usepackage{booktabs}
\usepackage{epsfig}
\usepackage{color}
\usepackage{float}
\usepackage{footnote}
\usepackage{mathrsfs}
\usepackage{multirow}
\usepackage{hyperref}
\usepackage{cleveref}

\usepackage{subfig}
\usepackage{tikz}
\usetikzlibrary{shapes}
\usetikzlibrary{positioning}
\usetikzlibrary{circuits}


\newcommand{\rowvec}[1]{\begin{bmatrix}#1\end{bmatrix}}

\newtheorem{remark}{Remark}[section]

\newtheorem{example}{Example}[section]
\newtheorem{lemma}{Lemma}[section]
\newtheorem{definition}{Definition}[section]

\newtheorem{theorem}{Theorem}[section]

\title{$\epsilon$-rank and the Staircase Phenomenon: New Insights into Neural Network Training Dynamics\thanks{Submitted to the editors DATE.
}}

\author{Jiang Yang\thanks{Department of Mathematics, SUSTech International Center for Mathematics \& National Center for Applied Mathematics Shenzhen (NCAMS), Southern University of Science and Technology, Shenzhen, China (\tt{yangj7@sustech.edu.cn}).}
	\and Yuxiang Zhao\thanks{Department of Mathematics, Southern University of Science and Technology, Shenzhen, China (\tt{12131241@mail.sustech.edu.cn}, \tt{12131244@mail.sustech.edu.cn}).}
	\and Quanhui Zhu\footnotemark[3]}

\usepackage{amsopn}

\ifpdf
\hypersetup{
  pdftitle={The Staircase Phenomenon in Training Dynamics of Neural Networks},
  pdfauthor={J. Yang, Q. Zhu, and Y. Zhao}
}
\fi

\begin{document}

\maketitle

\begin{abstract}
Understanding the training dynamics of deep neural networks (DNNs), particularly how they evolve low-dimensional features from high-dimensional data, remains a central challenge in deep learning theory. In this work, we introduce the concept of $\epsilon$-rank, a novel metric quantifying the effective feature of neuron functions in the terminal hidden layer. Through extensive experiments across diverse tasks, we observe a universal \textit{staircase phenomenon}: during training process implemented by the standard stochastic gradient descent methods, the decline of the loss function is accompanied by an increase in  the $\epsilon$-rank and exhibits a staircase pattern. Theoretically, we rigorously prove a negative correlation between the loss lower bound and $\epsilon$-rank, demonstrating that a high $\epsilon$-rank is essential for significant loss reduction. Moreover, numerical evidences show that within the same deep neural network, the $\epsilon$-rank of the subsequent hidden layer is higher than that of the previous hidden layer.  Based on these observations, to eliminate the staircase phenomenon, we propose a novel pre-training strategy on the initial hidden layer that  elevates the $\epsilon$-rank of the terminal hidden layer. Numerical experiments validate its effectiveness in reducing training time and improving accuracy across various tasks. Therefore, the newly introduced concept of $\epsilon$-rank is a computable quantity that serves as an intrinsic effective metric characteristic for deep neural networks, providing a novel perspective for understanding the training dynamics of neural networks and offering a theoretical foundation for designing efficient training strategies in practical applications.
\end{abstract}

\section{Introduction}
Deep neural networks (DNNs) have exhibited remarkable performance across a wide range of fields,
including computer vision, natural language processing, and computational physics,
due to their extraordinary representation capabilities.
A widely accepted perspective is that DNNs excel at capturing low-dimensional features embedded 
in high-dimensional data, a performance that underpins their success in tasks 
ranging from image recognition, reinforcement learning to solving partial differential equations.
While empirical evidence has established that neural networks excel at fitting low-dimensional 
structures in data, a theoretically sound tool to quantitatively analyze this process have been lacking.

The complexity of DNN training mechanisms fundamentally hinders the development of interpretable and generalizable models. This complexity arises from the interplay between the high-dimensional parameter space, non-convex optimization landscapes, and the dynamic evolution of functional representations during training. As a result, understanding how DNNs systematically extract and refine low-dimensional structures remains a critical challenge.

There are various approaches attempting to explain the mechanism of training dynamics in neural networks.
Visualization-based methods, such as those in \cite{zeilerVisualizingUnderstandingConvolutional2014, liVisualizingLossLandscape2018}, analyze the hierarchical formation of feature maps in convolutional networks and investigate how network architecture influences the geometry of the loss landscape during training. The impact of flat and sharp minima on generalization performance has been extensively studied in \cite{hochreiterFlatMinima1997, dinhSharpMinimaCan2017, DBLP:conf/iclr/KeskarMNST17}, providing insights into the relationship between the optimization landscape and generalization behavior.
The neural tangent kernel (NTK) theory provides a rigorous framework for analyzing the training dynamics of wide neural networks in the infinite-width limit. Jacot et al. \cite{jacotNeuralTangentKernel2018} introduce the NTK to describe the training dynamics as a convergent kernel in function space, offering valuable insights into how networks evolve during training. Extensions of the NTK framework in \cite{wangWhenWhyPINNs2022, biettiInductiveBiasNeural2019, chenGeneralizedNeuralTangent2020} further examine the generalization behavior of neural networks across various contexts.
Another important perspective comes from the frequency principle (or spectral bias), suggesting that neural networks tend to learn low-frequency patterns during the early stages of training \cite{xuFrequencyPrincipleFourier2020, rahamanSpectralBiasNeural2019, xuTrainingBehaviorDeep2019,xuOverviewFrequencyPrinciple2024}. 
This observation demonstrates that, when the training process of the neural network is projected into the spectral domain, the number of effective frequencies exhibits an increasing trend over time.

A multilayer perceptron (MLP) neural network can be constructed as 
\begin{equation}
	\left\{
	\begin{aligned}
		y_0 &= x, \\
		y_{k+1} &= \sigma(W_ky_{k}+b_k),& k = 0,\cdots,L-1,\\
		y & = \beta\cdot y_L,
	\end{aligned}\right.
	\label{st::NN}
\end{equation}
where $x\in\mathbb{R}^d$, $y_k\in \mathbb{R}^n$, and $W_0\in \mathbb{R}^{n\times d}, W_k\in \mathbb{R}^{n\times n}, b_k,\beta\in\mathbb{R}^n$ are trainable parameters. For simplicity, the widths of the hidden layers are chosen to be equal.
Denoting $\displaystyle \theta = \{W_k,b_k\}_{k=0}^{L-1}$, the output of the neural network can alternatively be expressed in the form 
\begin{equation*}
	y (x;\theta)= \sum_{j=1}^n\beta_j \phi_j(x;\theta).
\end{equation*}
We decompose the neural network into two parts: the \textbf{neuron functions} $\{\phi_j\}_{j=1}^n$ correspond to the neurons in the last hidden layer, and the coefficients $\{\beta_j\}_{j=1}^n$ are the weights in the output layer. 
This work focus on examining neural networks from the perspective of traditional computational mathematics.
Specifically, when the output of a neural network is expressed as a linear combination of the neurons in the last hidden layer, these neurons can be interpreted as a set of basis functions.
\cite{heReluDeepNeural2020,heReLUDeepNeural2022} show that ReLU-activated deep neural networks can reproduce 
all linear finite element functions and \cite{meethalFiniteElementMethodenhanced2023,mituschHybridFEMNNModels2021,ramuhalliFiniteElementNeuralNetworks2005}	
provide algorithms to combine classical finite element methods with neural networks. As discussed in \cite{huangUniversalApproximationUsing2006,nelsenRandomFeatureModel2021}, randomly generated neuron functions in shallow neural networks also exhibit sufficient representation ability. 
A detailed analysis of the coefficient matrices associated with the random features, particularly in terms of the distribution of their singular values, has been provided in \cite{chenOptimizationRandomFeature2024}.
The decay rate of the eigenvalues of the Gram matrix of a two-layer neural network with ReLU activation under general initialization has been analyzed in \cite{zhang2023shallownetworksstruggleapproximating}. Its further numerical study suggests that smoother activation functions lead to faster spectral decay of the Gram matrix. 
These findings underscore the connection between traditional computational methods and deep learning frameworks, motivating a deeper mathematical exploration of neural network representations.

In this work, we focus on understanding how neuron functions evolve during the training dynamics of neural networks. 
Motivated by prior researches, we investigate the behavior of the singular values of the Gram matrix associated with the neuron functions during the training process. To formalize the observations, we introduce the concept of \textbf{$\epsilon$-rank} for a set of functions (see as \cref{def::e_rank}), which quantitatively represents the number of effective features in the network.

Our key finding in this paper is the identification of a novel phenomenon concerning the $\epsilon$-rank of neuron functions, stated as follows:

\textit{\textbf{Staircase phenomenon}: In training dynamics, the loss function often decreases rapidly along with a significant growth of $\epsilon$-rank of neurons, and the evolution of the $\epsilon$-rank over time resembles a staircase-like pattern.}

\begin{figure}[htbp]
	\centering
	\includegraphics[width=0.8\textwidth]{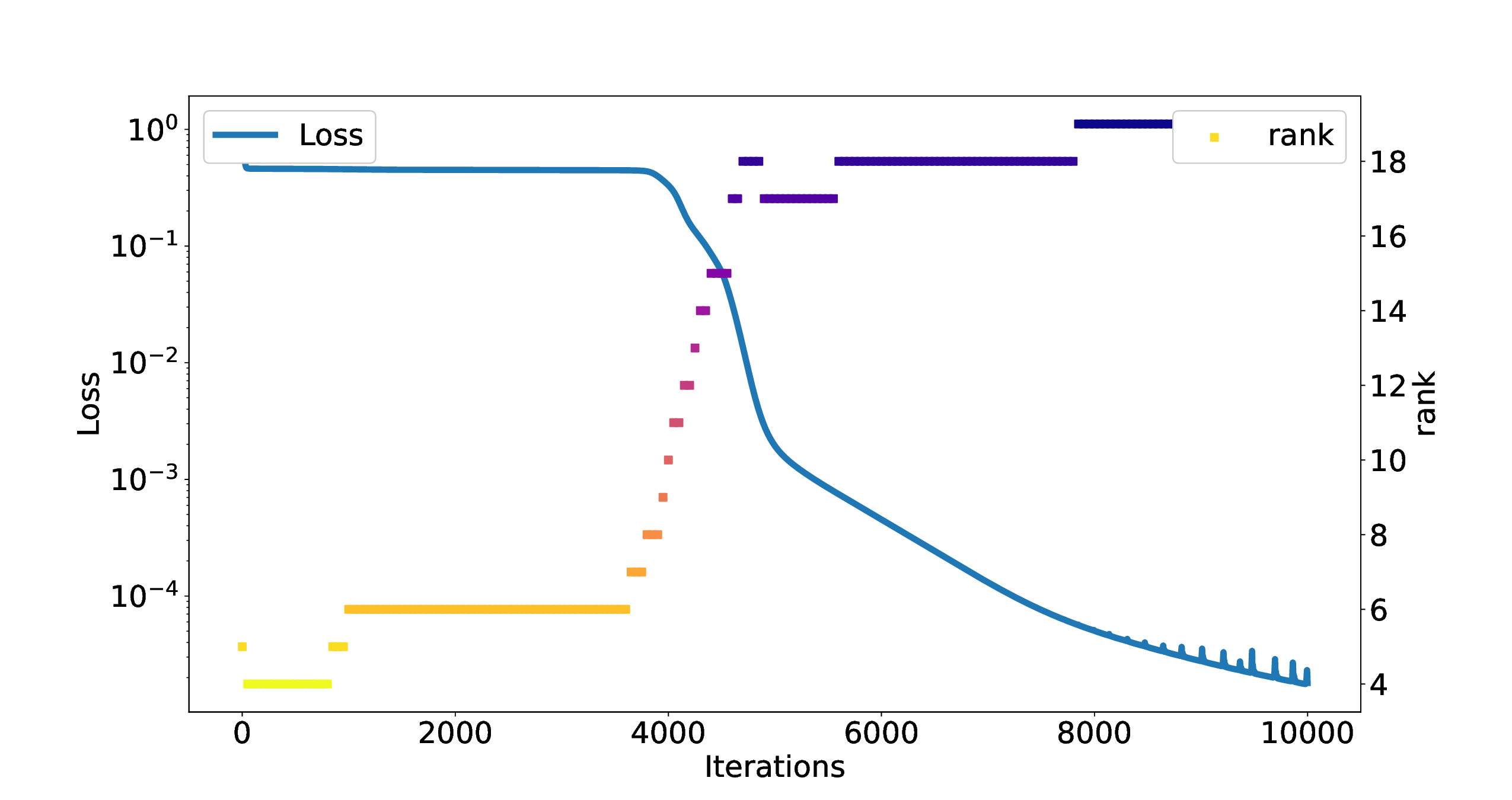}
	\caption{Staircase phenomenon of neuron functions in training dynamics.}
	\label{fig::toy example}
\end{figure}
As shown in the \cref{fig::toy example}, this increase in $\epsilon$-rank occurs in a stepwise fashion, closely resembling the structure of a staircase.
Specifically, under standard parameter initialization, neuron functions initially exhibit low linear independence.
Throughout the training process, variations in the loss function are closely linked to changes in the linear independence of these neuron functions. When the decline in the loss function reaches a plateau, the growth in linear independence tends to stabilize. Conversely, when the linear independence increases significantly, the loss experiences a rapid descent. We theoretically prove that the lower bound of the loss function decreases as the $\epsilon$-rank increases. This theoretical result holds for general deep neural networks.
Consequently, achieving a sufficiently large $\epsilon$-rank is essential for ensuring a significant reduction in the loss function.

This naturally raises the question: how can a set of highly linearly independent neuron functions be learned efficiently?
To address this, we propose a novel pre-training strategy that enable the neuron functions with a high $\epsilon$-rank before training. Specifically, by designing appropriate initialization schemes and selecting well-suited network architectures, it is possible to ensure that the neuron functions exhibit high linear independence from the outset. Such approaches significantly accelerate the training process or improve the accuracy.

\par

The main contributions of this work are summarized as follows:
\begin{itemize}
	\item[1.] \textbf{Identification of a universal training phenomenon}
	
	 The introduction of $\epsilon$-rank provides a novel perspective for understanding the training dynamics of neural networks, revealing a staircase phenomenon. This phenomenon is observed universally across various tasks, including function fitting, handwriting recognition, and solving partial differential equations. 
			
	\item[2.] \textbf{Theoretical characterization via $\epsilon$-rank dependent bounds}
	
	We prove that the loss function of deep neural networks has a lower bound related to the $\epsilon$-rank. 
	This bound decreases as the $\epsilon$-rank increases, thereby providing a theoretical explanation for the staircase phenomenon. 
	Our analysis and numerical examples conclude that a sufficiently high $\epsilon$-rank is a necessary condition for achieving a significant reduction in the loss function.
	
	\item[3.] \textbf{Pre-training strategy for $\epsilon$-rank enhancement}

	We propose a novel pre-training strategy for tanh-activated neural networks by constructing a set 
	of $\epsilon$-linearly independent neuron functions in the first hidden layer before training. This strategy enables the network to achieve a high $\epsilon$-rank at an early training stage, significantly reducing training time and improving model accuracy.
\end{itemize}

The rest of this paper is organized as follows. 
\cref{sec::ladder_phenomenon} introduces the concept of $\epsilon$-rank, and demonstrates the staircase phenomenon across diverse tasks and network configurations.
A lower bound on the loss function with respect to the $\epsilon$-rank is given in \cref{sec::theorem}, which provides the theoretical explanation of the staircase phenomenon. 
The particularly pre-training strategy is proposed in \cref{sec::initialization}, and numerical experiments validate its effectiveness in accelerating training dynamics and improving accuracy. 
Concluding remarks are given in the last section.

\section{Staircase Phenomenon}\label{sec::ladder_phenomenon}
\subsection{Preliminary}
Before presenting the staircase phenomenon, we introduce several definitions that are extensions of linear algebra.
\begin{definition}
	The $n$ functions $f_1(x),\cdots,f_n(x)$ are \textbf{linearly dependent} in domain $\Omega$ if, there exists $c_1,c_2,\cdots,c_n\in \mathbb{R}$ not all zero s.t. 
	\begin{equation}
		c_1f_1(x)+c_2f_2(x)+\cdots + c_nf_n(x)= 0,\qquad \forall x \in \Omega.
		\label{def::ld}
	\end{equation}
\end{definition}
If the functions are not linearly dependent, they are said to be \textbf{linearly independent}. 
For given $n$ functions $f_1(x),\cdots,f_n(x)$ in $L^2(\Omega)$, the Gram matrix $M$ is defined as follows:
\begin{equation}
	(M)_{ij} := \int_\Omega f_i(x)f_j(x)\mathrm{d}x,\quad 1\leq i,j\leq n.
\end{equation}
Obviously $M$ is symmetric positive semi-definite.

\begin{lemma}
	$f_1(x),\cdots,f_n(x)$ in are linearly independent if and only if the rank of the Gram matrix $r(M) = n$. 
\end{lemma}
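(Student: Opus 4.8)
The plan is to prove both directions via the contrapositive, relating the null space of the Gram matrix $M$ to linear dependence relations among the functions. The crucial algebraic fact I would isolate first is that for any vector $c = (c_1,\dots,c_n)^\top \in \mathbb{R}^n$, one has the identity
\begin{equation*}
	c^\top M c = \int_\Omega \left( \sum_{i=1}^n c_i f_i(x) \right)^2 \mathrm{d}x = \left\lVert \sum_{i=1}^n c_i f_i \right\rVert_{L^2(\Omega)}^2 \geq 0,
\end{equation*}
which follows by expanding the square and using the definition $(M)_{ij} = \int_\Omega f_i f_j\,\mathrm{d}x$ together with linearity of the integral. Since $M$ is symmetric positive semi-definite, $Mc = 0$ if and only if $c^\top M c = 0$ (the standard fact that a PSD matrix has $c$ in its kernel exactly when the associated quadratic form vanishes there; this can be seen via the eigendecomposition of $M$).

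First I would prove the ``only if'' direction by contraposition: suppose $r(M) < n$, so $M$ has a nontrivial kernel, i.e. there is $c \neq 0$ with $Mc = 0$. By the identity above, $\left\lVert \sum_i c_i f_i \right\rVert_{L^2(\Omega)}^2 = c^\top M c = 0$, hence $\sum_i c_i f_i = 0$ in $L^2(\Omega)$. This gives a nontrivial dependence relation, so the functions are linearly dependent. (If one wants the pointwise statement in \eqref{def::ld} rather than the $L^2$ statement, one needs a continuity assumption on the $f_i$, or to interpret the equality in the $L^2$/almost-everywhere sense; I would note this.) Conversely, for the ``if'' direction, again by contraposition: suppose the functions are linearly dependent, so $\sum_i c_i f_i = 0$ for some $c \neq 0$. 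Then $c^\top M c = \left\lVert \sum_i c_i f_i \right\rVert_{L^2(\Omega)}^2 = 0$, and since $M$ is PSD this forces $Mc = 0$, so $c$ is a nonzero kernel vector and $r(M) \leq n-1 < n$.

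The main obstacle here is not difficulty but precision about the function-space setting: the lemma as stated mixes a pointwise notion of linear dependence (equation \eqref{def::ld} quantifies over all $x \in \Omega$) with the $L^2$-based Gram matrix, and these agree only up to sets of measure zero. I would resolve this by either (a) assuming the $f_i$ are continuous, so that vanishing almost everywhere implies vanishing everywhere, or (b) restating linear dependence in the $L^2$ sense from the outset. With that caveat handled, the argument reduces to the two short implications above plus the quoted PSD fact, so the write-up is brief.
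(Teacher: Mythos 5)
Your argument is correct: the identity $c^\top M c = \bigl\lVert \sum_i c_i f_i \bigr\rVert^2$ together with the fact that a positive semi-definite matrix annihilates exactly the vectors on which its quadratic form vanishes gives both directions, and your caveat about pointwise versus almost-everywhere vanishing is a legitimate observation about the mismatch between the pointwise definition of dependence and the $L^2$-based Gram matrix. The paper states this lemma without proof, treating it as a standard linear-algebra fact, so there is nothing to compare against; your write-up is the canonical argument.
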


While the concept of linear dependence is well-defined in linear algebra, few functions in real-world applications strictly satisfy the condition specified in \cref{def::ld}. Consequently, a more practical and applicable indicator, extending beyond the framework of \cref{def::ld}, is required.
\begin{definition}
	The $n$ functions $f_1(x),\cdots,f_n(x)$ in are \textbf{$\epsilon$-linearly dependent} in domain $\Omega$ for some $\epsilon \geq 0$ if, there exists $c_1,c_2,\cdots,c_n\in \mathbb{R}$ with $\|\mathbf{c}\|^2=1$ s.t.,
	\begin{equation}
		\|c_1f_1+c_2f_2+\cdots+c_nf_n\|^2 \leq \epsilon.
		\label{def::2d}
	\end{equation}
	Otherwise they are said to be \textbf{$\epsilon$-linearly independent}.
\end{definition}

Here $\|\cdot\|$ is the short note of $L^2$ norm, $\displaystyle \|u\|= \sqrt{\langle u,u\rangle} = \left(\int_\Omega u^2\mathrm{d}x\right)^\frac{1}{2}$. It is easy to check that $f_1(x),\cdots,f_n(x)$ are $\epsilon$-linearly independent if and only if the minimum eigenvalue of $M$ satisfies $\lambda_{\min}(M) > \epsilon.$ In the following, we intend to extend these concepts of linear independence to the neuron functions of a neural network.

\begin{definition}
For a given neural network $u(x;\theta)$ defined on $\Omega\subset \mathbb{R}^d$ as
$$u(x;\theta)=\sum_{j=1}^n\beta_j\phi_j(x;\theta),$$
  the Gram matrix $M_u$ is defined as below
  $$(M_u)_{ij}=\int_\Omega\phi_i(x;\theta)\phi_j(x;\theta)\mathrm{d}x.$$
\end{definition}

Straightforwardly, we have the following definition of $\epsilon$-rank.  

\begin{definition}
	\label{def::e_rank}
	The \textbf{$\epsilon$-rank} of a neural network $u(x,\theta)$, associated with its Gram matrix $M_u$, is defined as the number of eigenvalues exceeding a tolerance $\epsilon$, i.e.,
	\begin{equation*}
		r_\epsilon(M_u):= \#\{\lambda(M_u) > \epsilon\},
	\end{equation*}
	where $\lambda(M_u)$ are eigenvalues of $M_u$, and compute the algebraic number of repetitions.
\end{definition}

The standard definitions of linear dependence and rank can be regarded as special cases corresponding to $\epsilon = 0$. For simplicity, the linear independence and rank mentioned in experiments refer to $\epsilon$-linear independence and $\epsilon$-rank, respectively. 
The $\epsilon$-rank serves as an effective metric for observing the evolution of effective features in neural networks during training. It provides critical insights into the training dynamics, enabling the design of efficient training strategies based on the correlation between feature diversity and loss reduction. However, in practical problem-solving scenarios, 
explicit computation of the $\epsilon$-rank is unnecessary, as its theoretical role primarily lies in understanding the training 
mechanism of neural networks.

\subsection{Staircase Phenomenon}
With the necessary groundwork established, we now focus on how the $\epsilon$-rank of the neuron basis functions evolves throughout the training process of the neural network. Across various tasks, including function fitting, handwriting recognition, and solving partial differential equations, we consistently observe the following phenomenon.

\textit{\textbf{Staircase phenomenon}: In training dynamics, the loss function often decreases rapidly along with a significant growth of $\epsilon$-rank of neurons, and the evolution of the $\epsilon$-rank over time resembles a staircase-like pattern.} 
The $\epsilon$-rank, as defined in \cref{def::e_rank}, quantifies the linear independence or effective features of neuron functions in the last hidden layer. We experimentally demonstrate this phenomenon under different settings. To illustrate this phenomenon, we first consider a function fitting problem, which is one of the most fundamental tasks in neural networks.
\begin{example}[Stairecase phenomenon in function fitting]\label{eg::ff}
	Consider the target function composed of multiple frequency components, defined as \begin{equation}
		f(x) = \cos x + \cos 2x + \cos 30x.
		\label{eg::target}
	\end{equation}
	The computational domain is $\Omega = [-1,1]$, and the mean square error is used as the loss function. 
	
This example consists of three distinct experimental settings:
	\begin{itemize}
	\item[(i)] (\cref{fig::base_seperate}) Investigate the performance of neural networks with varying width and depth. The network configurations are as follows:
	 (a) $L=2, n=50$. (b) $L=2,n=25$. (c) $L=4,n=50$. (d) $L=4,n=25$.
	\item[(ii)]  (\cref{fig::activation}) Test neural networks with width of $n=50$, depth of $L=3$ and the following different activation functions:
	\begin{itemize}
		\item ReLU: $\sigma(x) = \max(x,0)$.
		\item ELU: $\sigma(x) = x,$ if $x>0$, and $\sigma(x)=\alpha(e^x-1)$, if $x<0$.
		\item Cosine: $\sigma(x) = \cos(x)$.
		\item Hyperbolic tangent: $\displaystyle \sigma(x) = \frac{e^x-e^{-x}}{e^x+e^{-x}}$
	\end{itemize}
	\item[(iii)]  (\cref{fig::depth}) 
	Analyze the $\epsilon$-rank of the neuron functions across different layers for a fixed network width of $n = 50$ and depth of $L = 4$.  
\end{itemize}
\end{example}

Under different width and depth, the training results are presented in \cref{fig::base_seperate}. 
\begin{figure}[htbp]
	\centering
	\subfloat[b][$L=2,n=50$]{\includegraphics[scale=0.38]{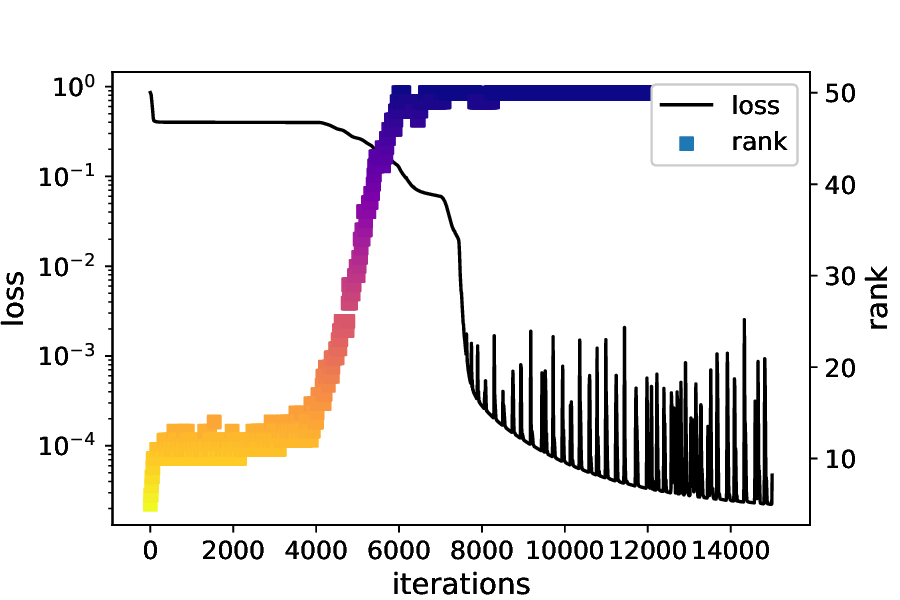}}
	\subfloat[b][$L=2,n=25$]{\includegraphics[scale=0.38]{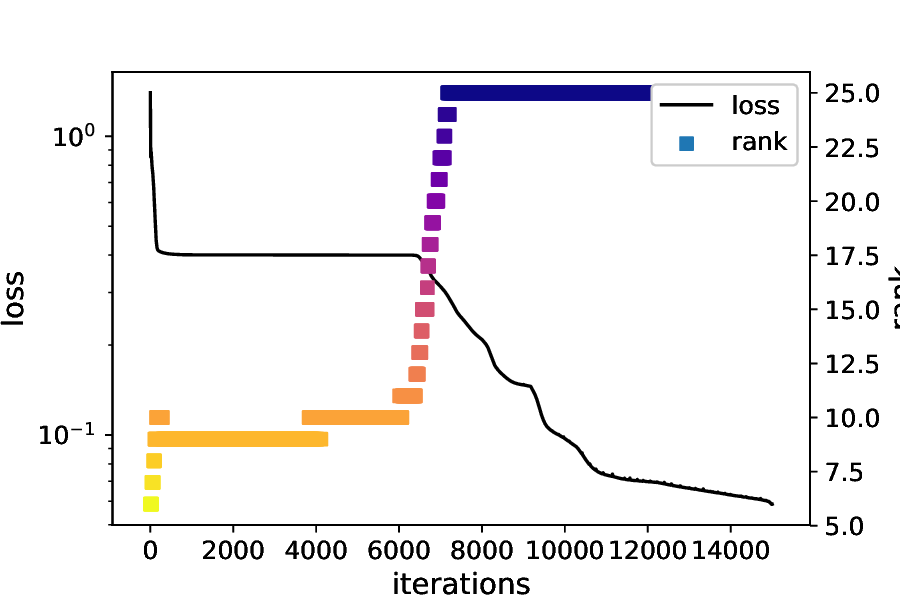}}
	
	\subfloat[b][$L=4,n=50$]{\includegraphics[scale=0.38]{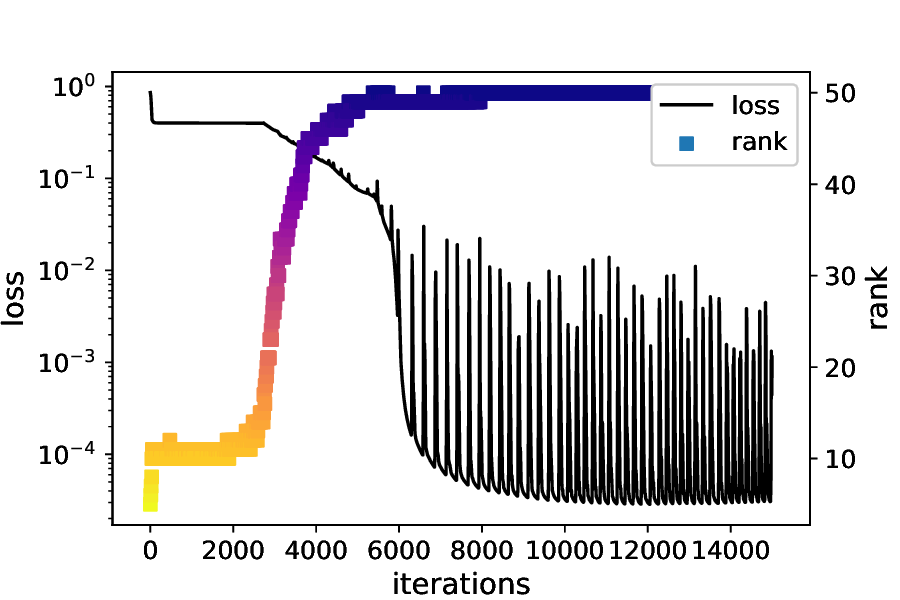}}
	\subfloat[b][$L=4,n=25$]{\includegraphics[scale=0.38]{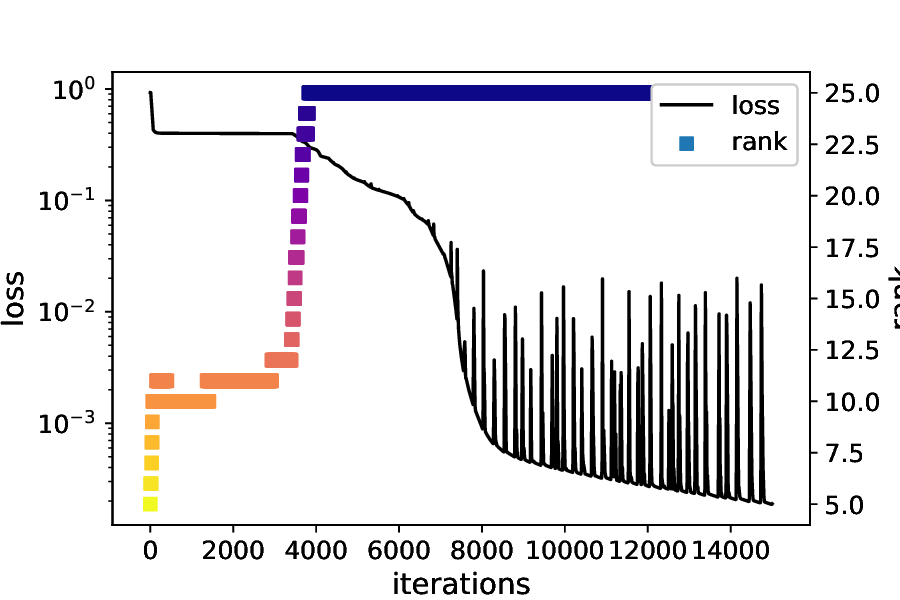}}
	\caption{(\cref{eg::ff}) Staircase phenomenon under varying network width ($n=25,50$) and depth ($L=2,4$). The black line represents the mean square error and the scatter plot indicates the $\epsilon$-rank.}
	\label{fig::base_seperate}
\end{figure}
The black line represents the loss function, while the scatters correspond to the $\epsilon$-rank of the Gram matrix. 
When the linear independence of the neuron functions remains unchanged, the loss function plateaus for thousands of iterations.
However, once the neuron functions become fully linearly independent, the loss function rapidly decreases to a lower level, as shown in \cref{fig::base_seperate}(a), (c) and (d). 
Subfigure (b) demonstrates that shallow and narrow networks fail to approximate the target function efficiently under insufficient training. 
Another notable observation from \cref{fig::base_seperate}(d) is that, after attaining full rank, the loss function experiences a further sharp decline.
This phase highlights a further unknown step in the learning process.

The appearance of staircase phenomenon is independent of the choice of activation function. 
\cref{fig::activation} illustrates the evolution of the $\epsilon$-rank during the training processes under several commonly used activation functions.
The results show that while its behavior differs depending on the specific activation function used, staircase phenomenon is present across these activation functions. It can be observed that the upward trend and the peak value of the $\epsilon$-rank vary across different activation functions. 
This is because neural network function classes constructed with different activation functions exhibit distinct approximation capabilities with respect to the target function.

\begin{figure}[htbp]
	\centering
	\subfloat[ReLU]{\includegraphics[width=0.48\textwidth]{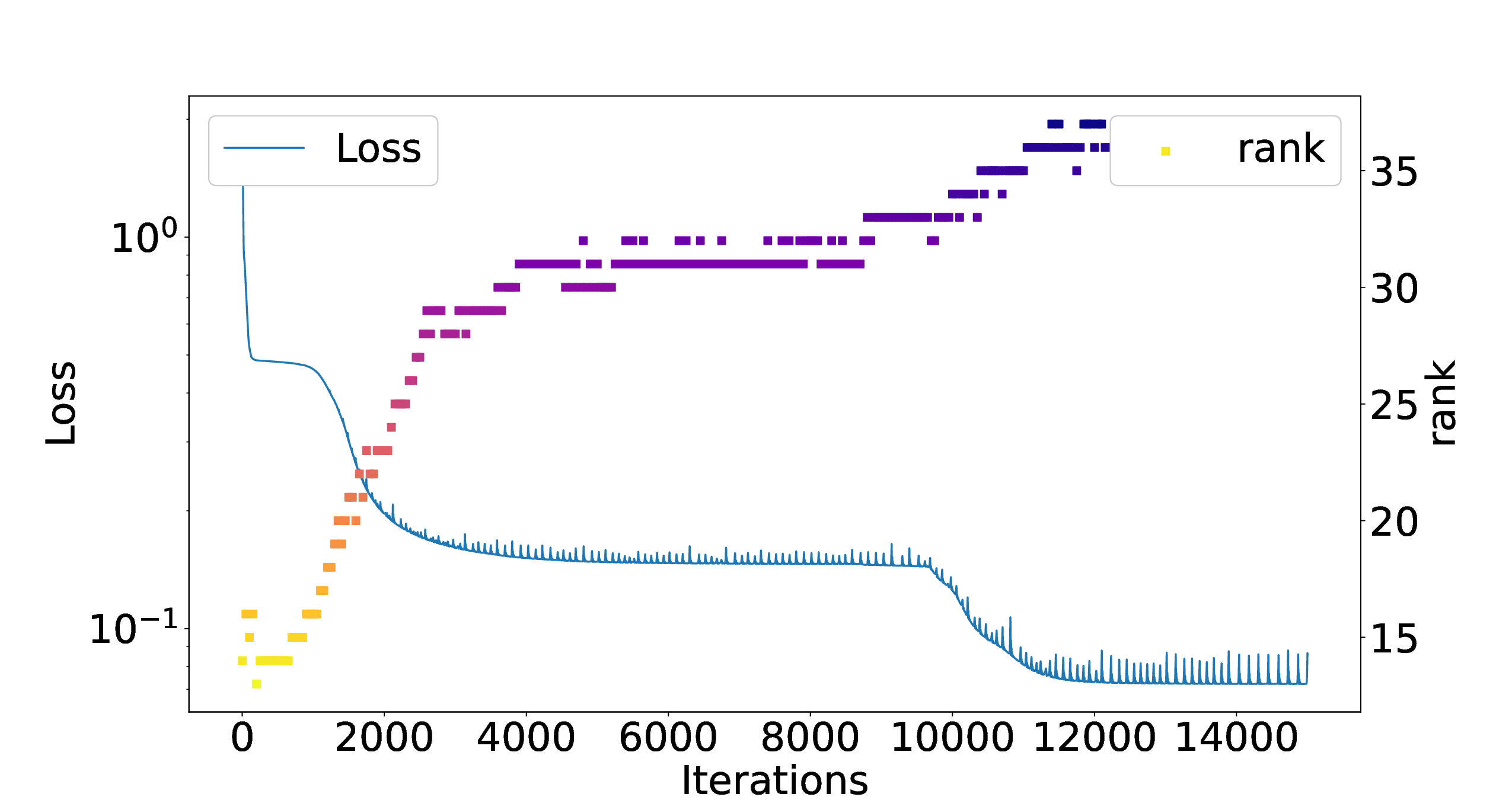}}
	\subfloat[ELU]{\includegraphics[width=0.48\textwidth]{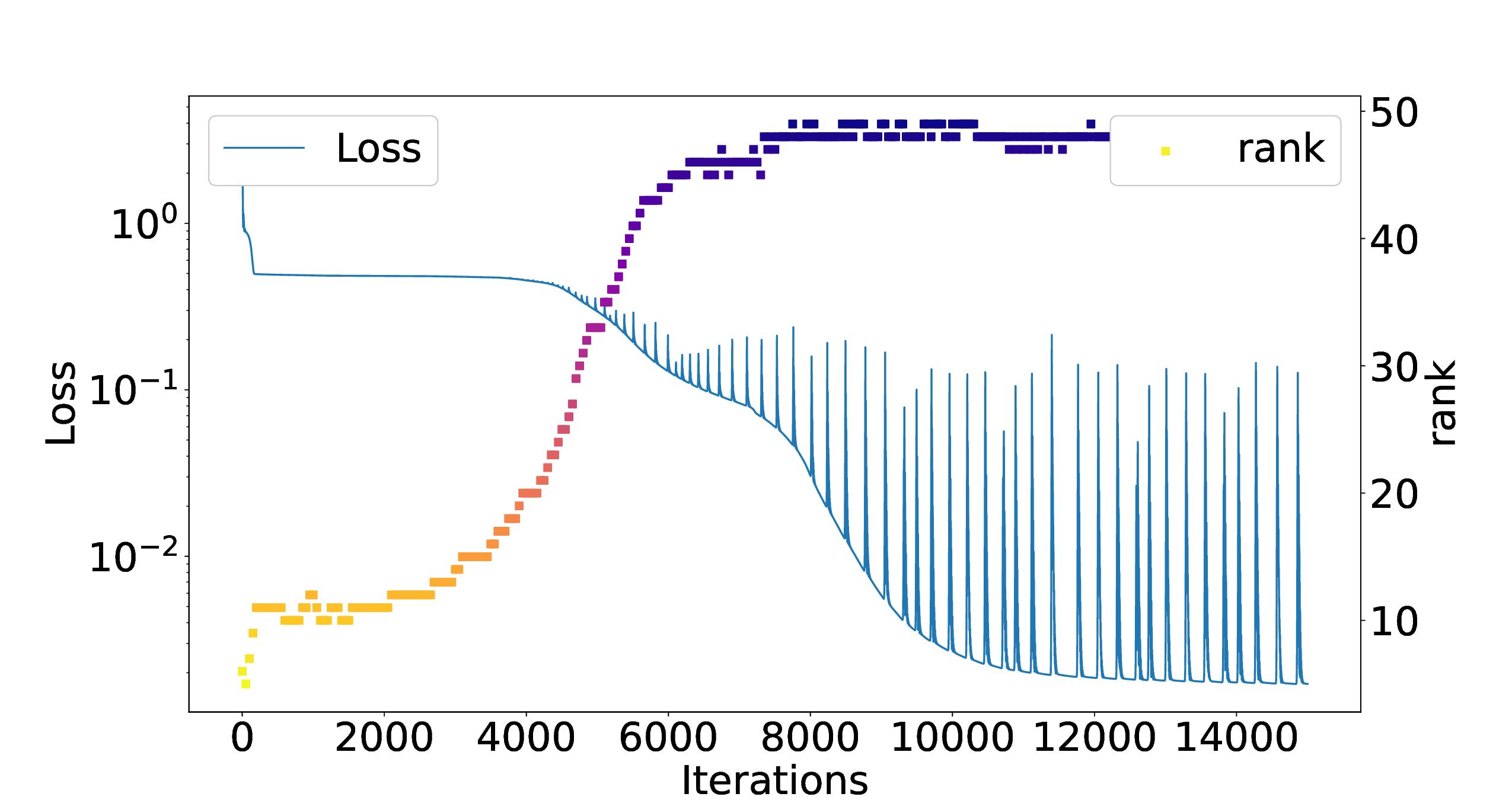}}

	\subfloat[Cosine]{\includegraphics[width=0.48\textwidth]{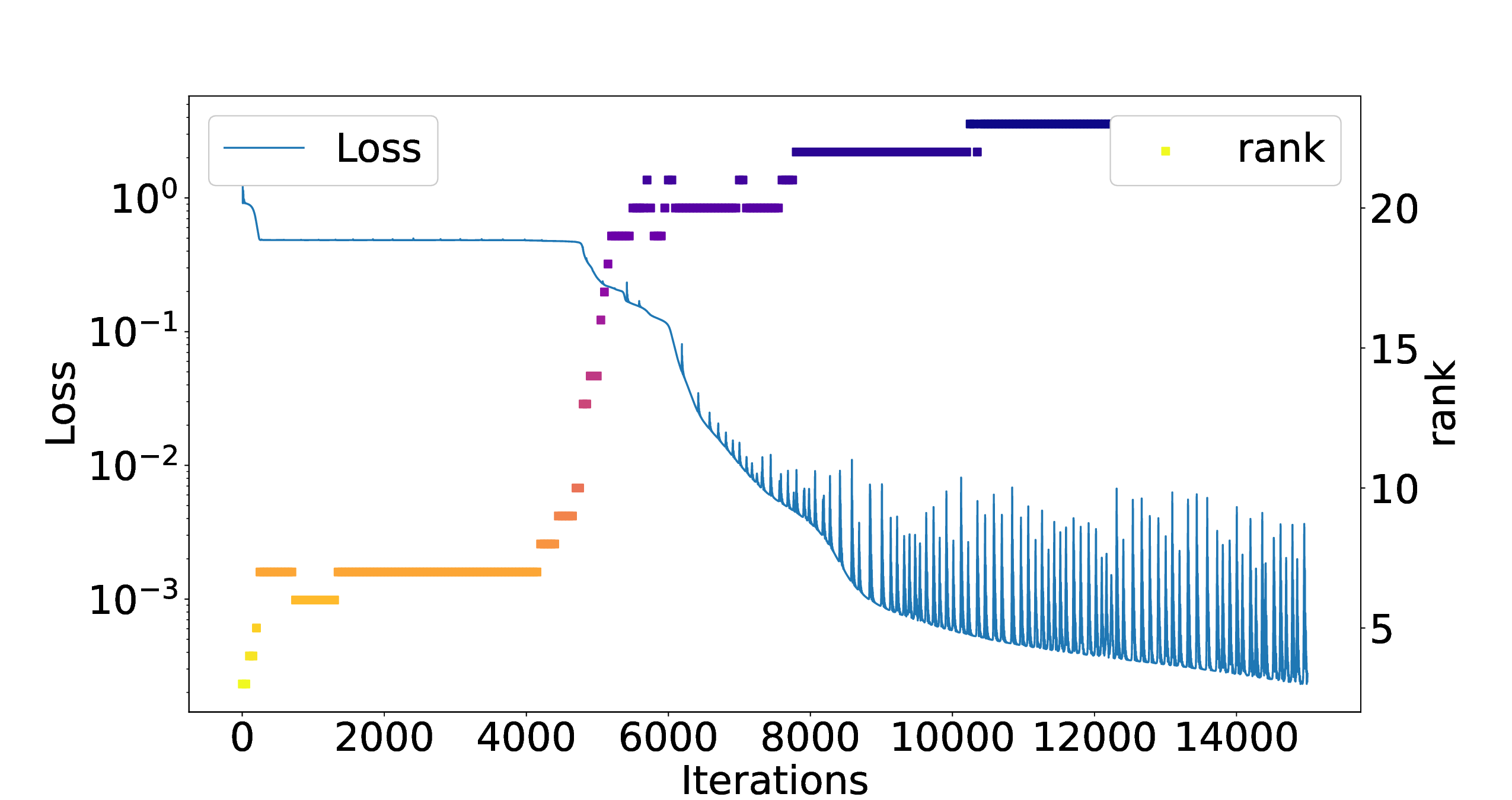}}
	\subfloat[Tanh]{\includegraphics[width=0.48\textwidth]{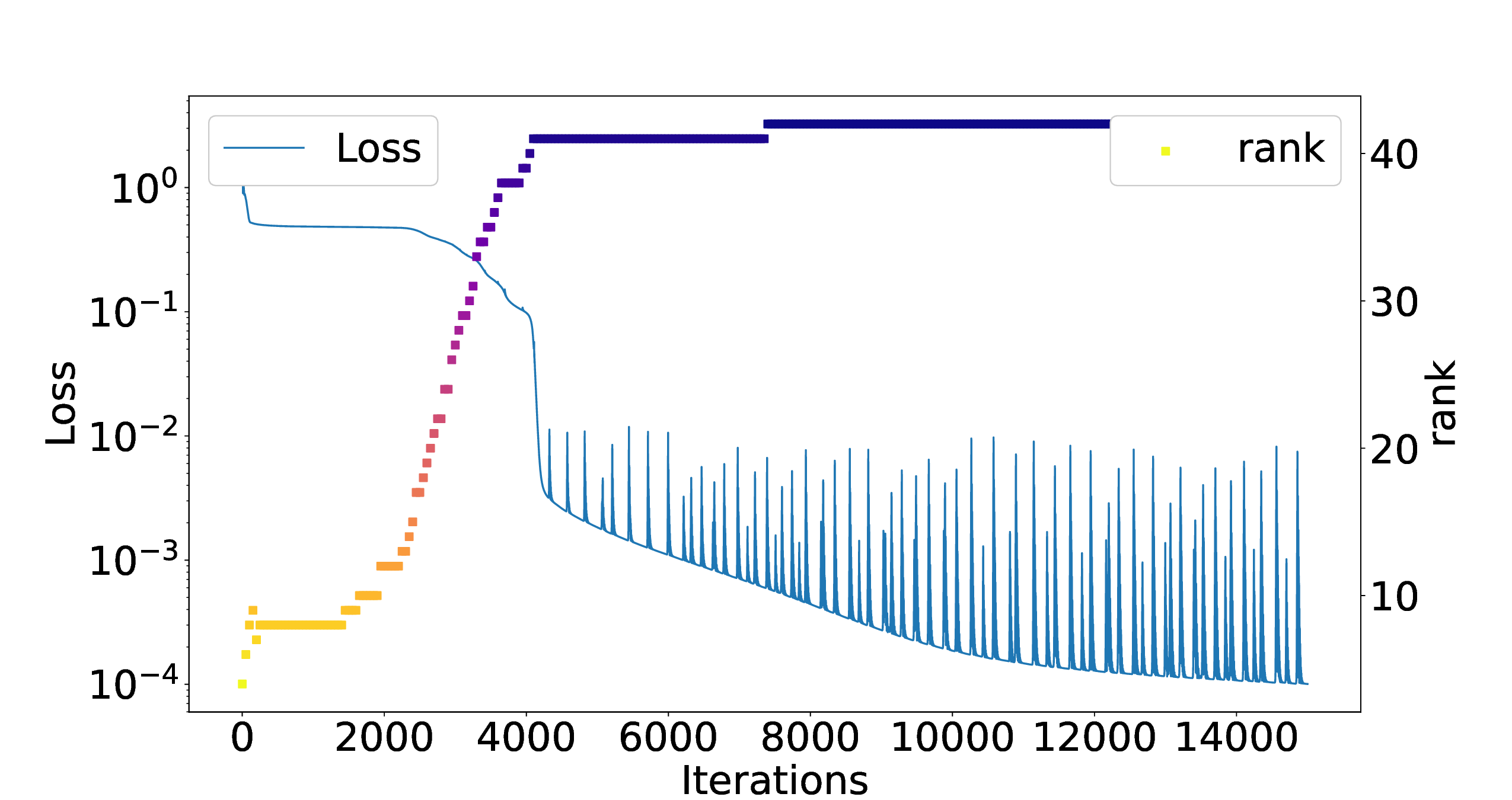}}

	\caption{(\cref{eg::ff}) Staircase phenomenon under different activation functions.}
	\label{fig::activation}
\end{figure}

We now demonstrate that the staircase phenomenon is not limited to function approximation but also occurs in solving partial differential equations (PDEs) and even in tasks like handwriting recognition. 
Since the pioneering work of \cite{raissiPhysicsinformedNeuralNetworks2019}, physics-informed neural networks (PINNs) have become a cornerstone in computational science for solving PDEs.
These methods have demonstrated robustness in high dimensional problems \cite{sirignanoDGMDeepLearning2018,eDeepRitzMethod2018,eDeepLearningbasedNumerical2017}, irregular domains \cite{zangWeakAdversarialNetworks2020,CiCP-28-2139}, and systems with complex dynamics \cite{liuMultiScaleDeepNeural2020,l.wightSolvingAllenCahnCahnHilliard2021}. However, as shown in \cref{fig::ac}, the staircase phenomenon manifests in PINN training: without principled initialization, the loss remains plateaued for thousands of iterations before decreasing. This issue extends beyond PDE solving—\cref{fig::handwriting} illustrates that similar training stagnation occurs in handwriting recognition tasks using standard initialization. These findings underscore the universality of the staircase phenomenon and the critical role of initialization design in overcoming training bottlenecks across diverse applications.

Moreover, this phenomenon not only occurs in function fitting, but also in solving partial differential equations (\cref{fig::ac}) and handwriting recognition (\cref{fig::handwriting}).

\begin{example}[Staircase phenomenon in solving Allen-Cahn equation,\cref{fig::ac}]\label{eg::ac}
	The Allen--Cahn equation is in the following form
	\begin{equation*}
			u_t = 0.0001 u_{xx} + u-u^3, \quad x\in [-1,1],\quad t\in[0,1].
	\end{equation*}
	The initial condition is $u_0(x) = \cos(\pi x)$ and periodic boundary condition is adapted. 
	The loss function is 
	\begin{equation*}
		\begin{aligned}
			\mathcal{L}(u) = &\|u_t - 0.0001 u_{xx} - u +u^3\|_{L^2([-1,1]\times [0,1])} + \\
			\mu_{ic}&\|u(\cdot, 0) - u_0\|_{L^2([-1,1])} + \mu_{bc}\|u(-1,\cdot) - u(1,\cdot)\|_{L^2([0,1])}^2.
		\end{aligned}
	\end{equation*}
\end{example}
\begin{example}[Staircase phenomenon in handwriting recognition, \cref{fig::handwriting}]\label{eg::hr}

	MNIST is a widely used database of handwritten digits for training various image processing systems. The dataset contains 70,000 grayscale images of handwritten digits (0 through 9), each with a resolution of 28x28 pixels. Thus, the dimension of the input is $d = 784$. In this example, the cross entropy loss function is employed for training.
\end{example}

\begin{figure}[htbp]
	\centering
	\includegraphics[width=0.7\textwidth]{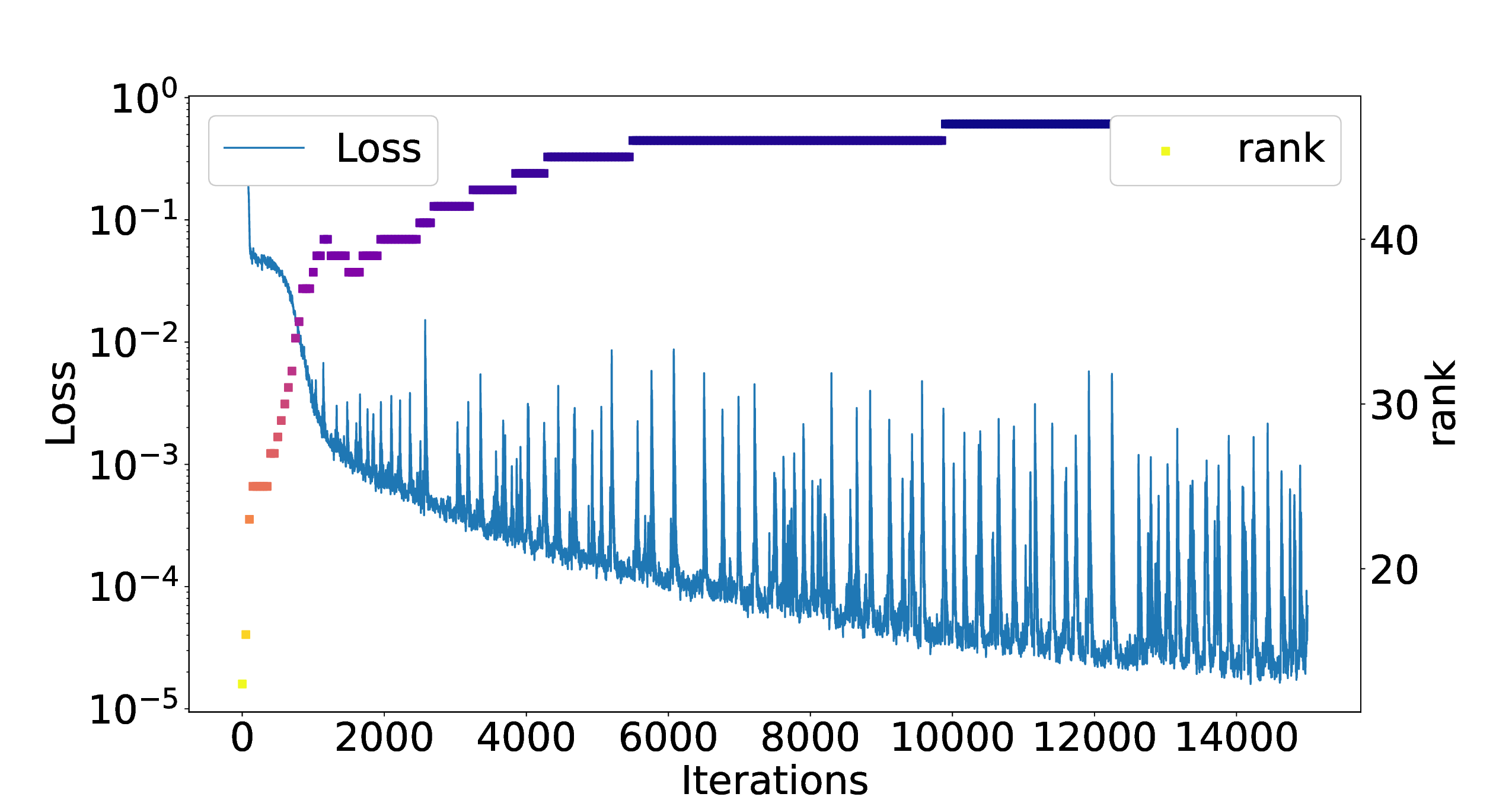}
	\caption{(\cref{eg::ac}) Staircase phenomenon in solving Allen--Cahn equation.}
	\label{fig::ac}
\end{figure}

\begin{figure}[htbp]
	\centering
	\includegraphics[width=0.7\textwidth]{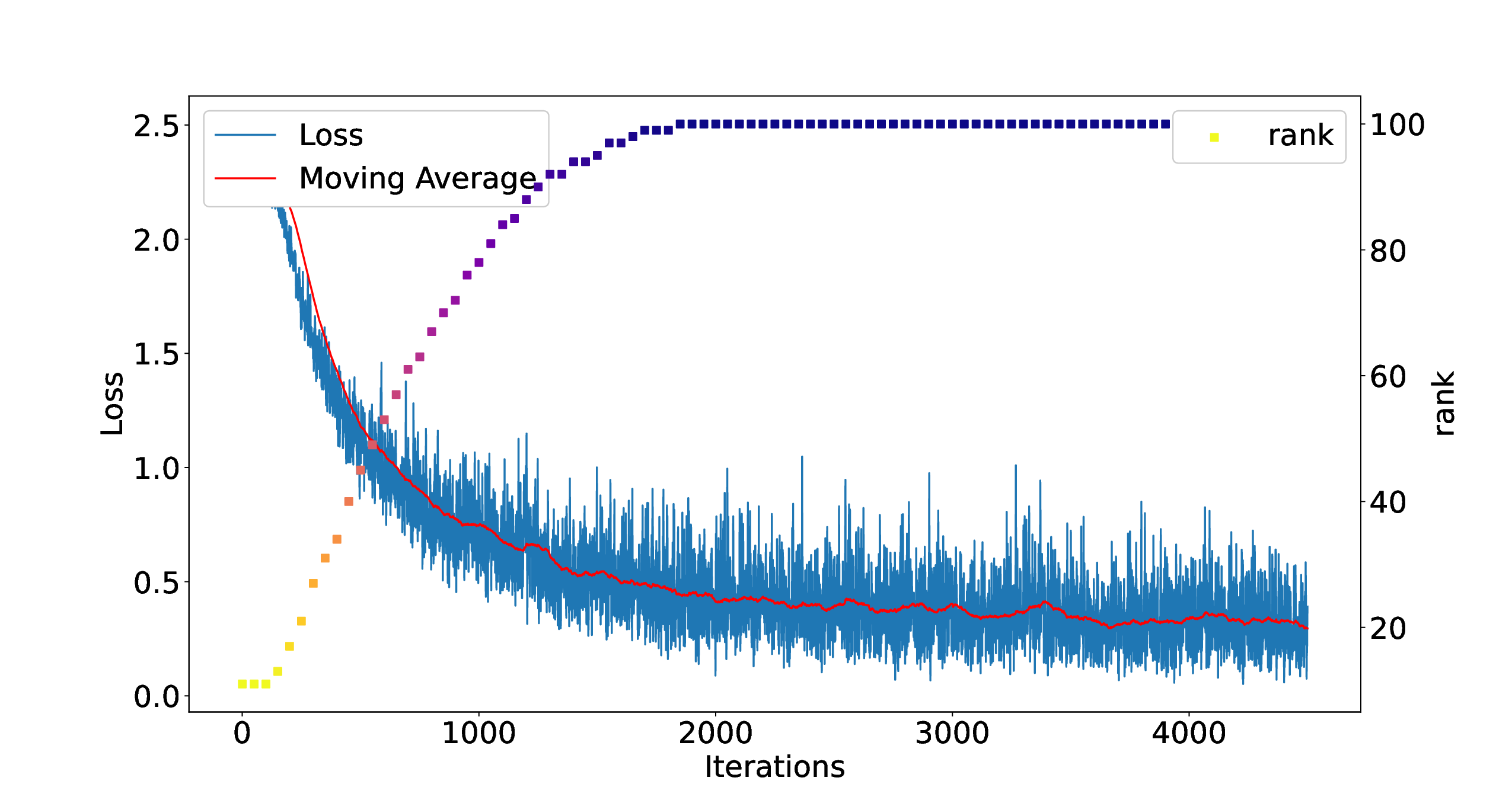}
	\caption{(\cref{eg::hr}) Staircase phenomenon in handwriting recognition.}
	\label{fig::handwriting}
\end{figure}

In \cref{fig::handwriting}, the staircase phenomenon is evident in the handwriting recognition task, demonstrating that this phenomenon can also be observed in high-dimensional problems.
This observation underscores the representational power of neural networks, which can effectively construct high-dimensional feature spaces, that is notably challenging in traditional scientific computing.

\section{Theoretical Explanation of the Staircase Phenomenon}\label{sec::theorem}
\subsection{Necessary Condition}
In this section, we demonstrate the theoretical significance of $\epsilon$-rank and provide an explanation of the staircase phenomenon. 
For well-posed problems, when the solution lies outside the class of neural network functions, we demonstrate that an increase in $\epsilon$-rank during the training process is a necessary condition for a reduction in the loss function.

It is trivial that if $f$ is a linear combination of $f_1,f_2,\cdots,f_n$ with $r(M) = p$, then $f$ can be rewritten as a linear combination of $p$ of them. Similarly, if $r_{\epsilon}(M) = p$, a comparable result holds.
We begins with the following useful lemma constructed in \cite{hongRankrevealingFactorizations1992}. 
\begin{lemma}\cite{hongRankrevealingFactorizations1992}\label{lem:orthogonal}
	Let $Q\in \mathcal{O}_{n,p}$ with $p\leq n$, and $\{Q_1,Q_2,\cdots,Q_k\}$ is the set of all $p$-by-$p$ sub-matrices of $Q$ where $k=\begin{pmatrix}
		n\\p
	\end{pmatrix}$. Define a vector in $\mathbb{R}^k$ by $\sigma(Q)=\left(\sigma_{\min}\left(Q_1\right),\sigma_{\min}\left(Q_2\right),\cdots,\sigma_{\min}\left(Q_k\right)\right)^T$. Then 
	$$\inf_{Q\in \mathcal{O}_{n,p}}\left(\|\sigma\left(Q\right)\|_{\infty}\right)\geq\frac{1}{\sqrt{p(n-p)+\min(p,n-p)}}.$$
\end{lemma}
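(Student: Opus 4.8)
The plan is to prove \cref{lem:orthogonal} by a dimension-counting / averaging argument combined with an explicit bound on how the singular values of the $p$-by-$p$ minors of an orthonormal $n$-by-$p$ matrix $Q$ relate to the geometry of the column space. First I would observe that since $Q \in \mathcal{O}_{n,p}$ has orthonormal columns, $Q^TQ = I_p$, and for any index set $S$ of size $p$ the submatrix $Q_S$ satisfies $Q_S^TQ_S + Q_{S^c}^TQ_{S^c} = I_p$, so $\sigma_{\min}(Q_S)^2 = 1 - \sigma_{\max}(Q_{S^c}^TQ_{S^c})$ relates the smallest singular value of one block to the largest of the complementary block. Summing the Frobenius norms over all $\binom{n}{p}$ choices of $S$ (or, more cleanly, over all $n$ rows, each of which lies in exactly $\binom{n-1}{p-1}$ of the submatrices) gives $\sum_{S}\|Q_S\|_F^2 = \binom{n-1}{p-1}\|Q\|_F^2 = \binom{n-1}{p-1}p = \binom{n}{p}\cdot\frac{p^2}{n}$, so on average $\|Q_S\|_F^2 = p^2/n$; hence there exists at least one $S$ with $\|Q_S\|_F^2 \geq p^2/n$.

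Next, for that particular $S$, I would turn the Frobenius-norm lower bound into a lower bound on $\sigma_{\min}(Q_S)$. Writing the singular values of $Q_S$ as $s_1 \geq \cdots \geq s_p$, we have $\sum_i s_i^2 = \|Q_S\|_F^2 \geq p^2/n$ while each $s_i \leq 1$ (because $s_i^2 \leq \sigma_{\max}(Q_S^TQ_S) \leq \sigma_{\max}(Q^TQ) = 1$). The worst case for $\sigma_{\min}(Q_S) = s_p$ under these constraints is when as many $s_i$ as possible are pinned at $1$ and the remaining mass is concentrated away from $s_p$; pushing this linear-programming-type extremal analysis through yields $s_p^2 \geq p^2/n - (p-1) = (p^2 - (p-1)n)/n$ when this is positive, and more careful bookkeeping (accounting for the fact that at most $\min(p, n-p)$ of the $s_i$ can fail to equal $1$, since $Q_{S^c}$ has rank at most $n-p$) refines this to the stated $1/\sqrt{p(n-p) + \min(p,n-p)}$. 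Concretely, $Q_{S^c}^TQ_{S^c} = I_p - Q_S^TQ_S$ has rank at most $\min(p, n-p)$, so $I_p - Q_S^TQ_S$ has at most $\min(p,n-p)$ nonzero eigenvalues, i.e. at least $p - \min(p,n-p)$ of the $s_i^2$ equal $1$; among the remaining $m := \min(p,n-p)$ values, their squares sum to at least $p^2/n - (p - m)$, and since each is at most $1$, the smallest is at least $\big(p^2/n - (p-m)\big) - (m-1) = p^2/n - p + 1 = (p^2 - pn + n)/n$, which after simplification and a clean rearrangement gives $\sigma_{\min}(Q_S)^2 \geq 1/(p(n-p) + m)$.

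The main obstacle I anticipate is the extremal step: the averaging argument only hands us one submatrix whose \emph{Frobenius} norm is large, and converting "sum of squared singular values is at least $c$" into "the minimum singular value is at least $\sqrt{c'}$" requires knowing how many of the singular values are forced to be strictly below $1$. The key structural fact that makes this work is the rank constraint on the complementary block $Q_{S^c}$ (it has at most $\min(p, n-p)$ nonzero singular values), which caps the number of "deficient" directions; without it the bound would be vacuous. I would therefore organize the proof so that this rank-complementarity observation is isolated as its own short claim, then feed it into the LP-type extremal estimate, and finally verify that the resulting algebraic expression matches $\frac{1}{\sqrt{p(n-p)+\min(p,n-p)}}$ and that the infimum over $\mathcal{O}_{n,p}$ is genuinely attained (or approached) so that the $\inf$ in the statement is legitimate — checking, e.g., the balanced case $p = n/2$ against a Hadamard-type construction to confirm the bound is of the right order.
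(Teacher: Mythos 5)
The paper does not actually prove this lemma; it is quoted from Hong and Pan's rank-revealing QR paper, where the index set $S$ is chosen to maximize $|\det Q_S|$ and the bound follows from a Cramer's-rule estimate showing every entry of $Q_{S^c}Q_S^{-1}$ has modulus at most $1$ (otherwise a row swap would increase the volume), whence $\sigma_{\min}(Q_S)^{-2}=1+\sigma_{\max}(Q_{S^c}Q_S^{-1})^2\le 1+p(n-p)$, with the $\min(p,n-p)$ refinement coming from the rank of $Q_{S^c}Q_S^{-1}$. Your proposal replaces this volume-maximizing selection with a Frobenius-norm averaging argument, and that substitution is where the proof breaks.

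The averaging step itself is correct ($\sum_S\|Q_S\|_F^2=\binom{n-1}{p-1}\,p$, so some $S$ has $\|Q_S\|_F^2\ge p^2/n$), but a large Frobenius norm does not control $\sigma_{\min}$. Take $n=4$, $p=2$, $Q=[e_1\ e_2]$ and $S=\{1,3\}$: then $\|Q_S\|_F^2=1=p^2/n$ yet $\sigma_{\min}(Q_S)=0$, so the submatrix your averaging produces can be singular. The rank observation you invoke to rescue the LP step (at least $p-\min(p,n-p)$ singular values equal $1$) is true but empty whenever $n\ge 2p$, since then $\min(p,n-p)=p$ and no singular value is pinned at $1$. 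Finally, the algebra does not close: your extremal estimate gives $s_p^2\ge p^2/n-p+1=1-p(n-p)/n$, which coincides with $1/(p(n-p)+\min(p,n-p))$ only at $p=1$ and $p=n-1$ and is negative, hence vacuous, whenever $p(n-p)>n$ (already for $n=5$, $p=2$, where your bound is $-1/5$ against a target of $1/8$). To repair the argument you must change the selection rule from "largest Frobenius norm" to "largest determinant," which is exactly the mechanism in the cited reference; no averaging over all $\binom{n}{p}$ submatrices can certify a positive lower bound on the smallest singular value.
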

\begin{theorem}\label{thm::rpineq}
	If $\displaystyle f = \sum_{j=1}^{n}\beta_j f_j $ with $\|\mathbf{\beta}\|^2 \leq C$, $r_{\epsilon}(M_f) = p$ for some $\epsilon \geq 0$ and positive integer $p \leq n$, then after a reorder of set $\{f_1,\cdots,f_n\} = \{\tilde{f_1},\cdots,\tilde{f_n}\}$ if necessary, $f$ can be approximated by $\displaystyle \tilde{f} = \sum_{j=1}^{p}\tilde{\beta_j} \tilde{f_j}$ with $\|f-\tilde{f}\|^2 \leq C(p+1)(n-p)^2\epsilon$.
\end{theorem}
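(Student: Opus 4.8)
The plan is to exploit the spectral structure of $M_f$. Since $r_\epsilon(M_f)=p$, the eigenvectors of $M_f$ associated with the $n-p$ smallest eigenvalues span an $(n-p)$-dimensional subspace on which the quadratic form $c\mapsto \|\sum_j c_j f_j\|^2 = c^\top M_f c$ is at most $\epsilon$. First I would write $M_f = U\Lambda U^\top$ with eigenvalues $\lambda_1\geq\cdots\geq\lambda_n$, so that $\lambda_{p+1},\dots,\lambda_n\leq\epsilon$, and collect the corresponding orthonormal eigenvectors as the columns of a matrix $N\in\mathcal{O}_{n,n-p}$. The functions $g_m:=\sum_{j=1}^n N_{jm}f_j$ then satisfy $\|g_m\|^2=(N^\top M_f N)_{mm}=\lambda_{p+m}\leq\epsilon$; these ``nearly vanishing'' combinations are the tool I will use to eliminate $n-p$ of the $f_j$.

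Second, I would apply \cref{lem:orthogonal} to $N\in\mathcal{O}_{n,n-p}$, taking the lemma's parameter to be $n-p$ (so that the complementary dimension $n-(n-p)=p$): there is a set $I$ of $n-p$ row indices for which the $(n-p)\times(n-p)$ submatrix $N_I$ is invertible with
$$\sigma_{\min}(N_I)^{-2}\leq (n-p)p+\min(n-p,p)\leq (n-p)(p+1).$$
After relabelling the $f_j$ so that $I=\{p+1,\dots,n\}$, partition $N=\big[\,N_A^\top\ \ N_I^\top\,\big]^\top$ with $N_A\in\mathbb{R}^{p\times(n-p)}$, and treat $(f_1,\dots,f_n)$ as a row ``vector of functions''. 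Then $(g_1,\dots,g_{n-p})=(f_1,\dots,f_p)N_A+(f_{p+1},\dots,f_n)N_I$, and invertibility of $N_I$ gives $(f_{p+1},\dots,f_n)=(g_1,\dots,g_{n-p})N_I^{-1}-(f_1,\dots,f_p)N_A N_I^{-1}$, i.e. the last $n-p$ basis functions are expressed as linear combinations of the first $p$ up to the small terms $g_m$.

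Third, I would substitute this into $f=\sum_{j=1}^p\beta_j f_j+\sum_{j=p+1}^n\beta_j f_j$. Splitting $\beta=(\beta',\beta'')$ accordingly, the tail $\sum_{j=p+1}^n\beta_j f_j$ becomes $(f_1,\dots,f_p)(-N_A N_I^{-1}\beta'')+(g_1,\dots,g_{n-p})(N_I^{-1}\beta'')$, so setting $\tilde f:=\sum_{j=1}^p\tilde\beta_j f_j$ with $\tilde\beta:=\beta'-N_A N_I^{-1}\beta''$ yields $f-\tilde f=\sum_{m=1}^{n-p}d_m g_m$, where $d:=N_I^{-1}\beta''$. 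Finally, using the triangle inequality, $\|g_m\|\leq\sqrt{\epsilon}$, Cauchy--Schwarz in the form $\|d\|_1\leq\sqrt{n-p}\,\|d\|_2$, the bound $\|d\|_2\leq\sigma_{\min}(N_I)^{-1}\|\beta''\|_2$, and $\|\beta''\|^2\leq\|\beta\|^2\leq C$, I get
$$\|f-\tilde f\|^2\leq\epsilon\,(n-p)\,\sigma_{\min}(N_I)^{-2}\,\|\beta''\|^2\leq C(p+1)(n-p)^2\epsilon,$$
which is the claimed estimate.

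I expect the main obstacle to be bookkeeping rather than conceptual: matching parameters correctly when invoking \cref{lem:orthogonal} (its ``$p$'' is our $n-p$), and keeping the index permutation consistent with the block partition of $N$ in the ``vector of functions'' manipulation. The exact constant $(p+1)(n-p)^2$ comes out with no slack beyond the two elementary facts $\min(n-p,p)\leq n-p$ and $\|d\|_1\leq\sqrt{n-p}\,\|d\|_2$. The degenerate case $p=n$ is trivial, with $N$ empty and $\tilde f=f$.
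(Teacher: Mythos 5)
Your proposal is correct and follows essentially the same route as the paper: spectral decomposition of the Gram matrix, application of \cref{lem:orthogonal} to the $n\times(n-p)$ eigenvector block for the small eigenvalues to extract a well-conditioned $(n-p)\times(n-p)$ submatrix (the paper's $V_{22}$ is your $N_I$, and your $\tilde\beta=\beta'-N_AN_I^{-1}\beta''$ is exactly the paper's corrected coefficient vector), yielding the same constant $C(p+1)(n-p)^2\epsilon$. The only cosmetic difference is that you obtain the factor $(n-p)\epsilon$ via the triangle inequality and $\|d\|_1\leq\sqrt{n-p}\,\|d\|_2$ on the near-null combinations $g_m$, whereas the paper bounds it by the tail eigenvalue sum $\lambda_{p+1}+\cdots+\lambda_n$.
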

\begin{proof}
	Consider the spectral decomposition $M = Q\Lambda Q^T$, $Q$ is orthogonal, $\Lambda = \text{diag}([\lambda_1,\lambda_2,\cdots,\lambda_n])$ is diagonal with eigenvalues $\lambda_1\geq\cdots\geq\lambda_p>\epsilon\geq\lambda_{p+1}\geq\cdots\geq\lambda_n\geq0$. For simplicity, we denote $\beta=\rowvec{\beta_1&\cdots&\beta_n}^T$ and $F=\rowvec{f_1&\cdots&f_n}^T$, thus $f=\beta^TF$. Let $P$ be a permutation matrix, and denote
	\[
	PQ = \left[ \begin{array}{cc}
		V_{11} & V_{12} \\
		V_{21} & V_{22}
	\end{array} \right]
	\begin{array}{l}
		\scriptstyle p \\
		\scriptstyle n - p
	\end{array}
	\vspace{-1em} 
	\]
	\[
	\begin{array}{cc}
		\scriptstyle p & \hspace{0.5em}\scriptstyle n-p
	\end{array}
	\]
	as a partitioning of the matrix $PQ$. Now we construct an approximation of $f$ by $\tilde{f}=\tilde{\beta}^T\tilde{F}$, where
	\begin{equation*}
		\begin{aligned}
			\tilde{\beta}&=\rowvec{I_p& 0}P\beta-V_{12}V_{22}^{-1}\rowvec{0&I_{n-p}}P\beta,\\
			\tilde{F}&=\rowvec{I_p& 0}PF.
		\end{aligned}
	\end{equation*}
	To estimate $L_2$ error of $\|f-\tilde{f}\|^2$, we note that
	$$
	\begin{aligned}
		&\quad\,\,\|f-\tilde{f}\|^2\\&=\|(P\beta)^T(PF)-\left(\rowvec{I_p& 0}P\beta-V_{12}V_{22}^{-1}\rowvec{0&I_{n-p}}P\beta\right)^T\rowvec{I_p& 0}PF\|^2\\
		&=\|\left(\rowvec{0&I_{n-p}}P\beta\right)^T\left(\rowvec{0&I_{n-p}}PF\right)+\left(\rowvec{0&I_{n-p}}P\beta\right)^TV_{22}^{-T}V_{12}^T\rowvec{I_p& 0}PF\|^2\\
		&=\|\left(\rowvec{0&I_{n-p}}P\beta\right)^TV_{22}^{-T}\left(V_{22}^T\rowvec{0&I_{n-p}}PF+V_{12}^T\rowvec{I_p& 0}PF\right)\|^2\\
		&=\|\left(\rowvec{0&I_{n-p}}P\beta\right)^TV_{22}^{-T}\left(PQ\rowvec{0&I_{n-p}}^T\right)^TPF\|^2\\
		&=\|\left(\rowvec{0&I_{n-p}}P\beta\right)^TV_{22}^{-T}\rowvec{0&I_{n-p}}Q^TF\|^2\\
		&\leq \|\beta\|^2\|V_{22}^{-T}\|^2\left(\lambda_{p+1}+\cdots+\lambda_n\right)\\
		&\leq C\|V_{22}^{-T}\|^2(n-p)\epsilon.
	\end{aligned}
	$$
	It remains to control the term $\|V_{22}^{-T}\|^2$. By \cref{lem:orthogonal}, for any orthogonal matrix $Q$, there exists a permutation $P$, such that $\|V_{22}^{-T}\|^2\leq (p+1)(n-p)$.
\end{proof}

It is worth noting that the bound established in the theorem is quite loose. The final inequality in the proof provides an upper bound
for $\lambda_{p+1}+\cdots+\lambda_n$ as $(n-p)\epsilon$, even though eigenvalues often decay rapidly in practice. Moreover, the bound in \cref{lem:orthogonal} is not sharp, except for $p=1$ and $p=n-1$. We conjecture that $\frac{1}{\sqrt{n}}$ provides the sharp bound for all $p$, supported by extensive numerical experiments that have yet to produced a counterexample.

We then apply the theorem to the neural networks. The $L$-layer neural network is represented as follows:
\begin{equation}
	\begin{aligned}
		\mathcal{F}_n &=\left\{\sum_{i=1}^n\beta_j\phi_j(x;\theta)\bigg|\phi_j(x;\theta)\in\mathcal{H}_L,\beta_j \in \mathbb{R}, j = 1,\cdots,n\right\},\\
		 \mathcal{H}_k &= \left\{\sigma(w_k\cdot y(x)+b_k)\bigg|y(x)\in\mathbb{R}^{n_{k-1}}, y_j(x)\in \mathcal{H}_{k-1}, \right.\\
		 &\qquad  \left. w_k \in\mathbb{R}^{n_{k-1}}, b_k\in\mathbb{R}\vphantom{\bigg|}\right\},\quad  k=1,\cdots,L,\\
		 \mathcal{H}_0 & = \left\{ x\in\Omega\subset\mathbb{R}^d\right\}.
	\end{aligned}
\end{equation}
where $\{\beta_j\}_{j=1}^n$ and $\displaystyle \theta = \{(W_k,B_k)\}_{k=1}^L$ are trainable coefficients, and $\sigma$ is the activation function satisfying the universal approximation theorem of neural networks \cite{hornikApproximationCapabilitiesMultilayer1991}. 
The two-layer (shallow) neural network is a special case when $L=1$.

Consider the following loss function:
\begin{equation}\label{lossl2ff}
	\min_{u\in \mathcal{F}_n}\mathcal{L}(u) = \|\mathcal{G}(u) - f\|^2,
\end{equation}
where  $f\in L^2(\Omega)$ represents the target function or data. In this work, we always assume that the exact solution $u^*$ does not belong to the function class $\mathcal{F}_n$. Obviously, with standard definition of the linear independence, the Gram matrix of a optimizer $u_n^*\in\mathcal{F}_n$ should be full rank. Otherwise, some redundant neurons in the last layer can be deleted. More concisely, $r(M_{u_n^*}) = n$. 
For a wider network, we can always find a better approximation, i.e., 
\begin{equation*}
	\text{dist}(u^*,\mathcal{F}_n)<\text{dist}(u^*,\mathcal{F}_m),\quad m<n,
\end{equation*}
where $\text{dist}(u,A) = \min_{v\in A}\|u-v\|$. However, the situation will become quite different if we consider the concept of the $\epsilon$-rank. We get a more precise relationship between the loss function and the $\epsilon$-rank. 

 \begin{theorem}\label{thm::rpineq2}
	Given the problem $\mathcal{G}(u) = f$ and $\mathcal{G}^{-1}$ being the solution operator, assume that the problem satisfies the following stability condition,
	for any $u,v$,
	\begin{equation}\label{assmp}
		\|\mathcal{G}^{-1}(u)-\mathcal{G}^{-1}(v)\|\leq C_S\|u-v\|.
	\end{equation}
	Let $u_n$ be an arbitrary approximation in $\mathcal{F}_n$  with the $\epsilon$-rank equalling to $p$, i.e., 

	$$\displaystyle u_n(x;\theta) = \sum_{j=1}^n\beta_j \phi_j(x;\theta),$$
	 where $\|\beta\|^2\leq C$ and $r_\epsilon(M_{u_n})=p$. 
	Then
	\begin{equation}
		\sqrt{\mathcal{L}(u_n)}\ge \frac{1}{C_s}\left(dist(u^*,\mathcal{F}_p)- \sqrt{C(p+1)(n-p)^2\epsilon}\right),
	\end{equation}
	where $u^* = \mathcal{G}^{-1}(f)$ is the exact solution, $\mathcal{L}(u) = \|\mathcal{G}(u)-f\|^2$ is the loss function.
\end{theorem}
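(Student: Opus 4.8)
The plan is to combine the stability assumption \eqref{assmp} with the approximation result of \cref{thm::rpineq}. First I would invoke \cref{thm::rpineq}: since $u_n = \sum_{j=1}^n \beta_j \phi_j$ with $\|\beta\|^2 \le C$ and $r_\epsilon(M_{u_n}) = p$, after a suitable reordering of the neuron functions there exists $\tilde{u}_n = \sum_{j=1}^p \tilde\beta_j \tilde\phi_j$, a genuine element of $\mathcal{F}_p$ (it uses only $p$ of the terminal-layer neurons, which lie in the appropriate $\mathcal{H}_L$), with $\|u_n - \tilde{u}_n\|^2 \le C(p+1)(n-p)^2\epsilon$. This gives the bridge from the $\epsilon$-rank-$p$ function $u_n$ to the smaller class $\mathcal{F}_p$.

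Next I would chain three inequalities to bound $\mathrm{dist}(u^*, \mathcal{F}_p)$ from above. Since $\tilde{u}_n \in \mathcal{F}_p$, we have $\mathrm{dist}(u^*,\mathcal{F}_p) \le \|u^* - \tilde{u}_n\| \le \|u^* - u_n\| + \|u_n - \tilde{u}_n\|$. The second summand is at most $\sqrt{C(p+1)(n-p)^2\epsilon}$ by the previous step. For the first summand, I would write $u^* = \mathcal{G}^{-1}(f)$ and $u_n = \mathcal{G}^{-1}(\mathcal{G}(u_n))$, and apply \eqref{assmp} with the pair $(f, \mathcal{G}(u_n))$ to get $\|u^* - u_n\| = \|\mathcal{G}^{-1}(f) - \mathcal{G}^{-1}(\mathcal{G}(u_n))\| \le C_S \|f - \mathcal{G}(u_n)\| = C_S\sqrt{\mathcal{L}(u_n)}$. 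Combining, $\mathrm{dist}(u^*,\mathcal{F}_p) \le C_S\sqrt{\mathcal{L}(u_n)} + \sqrt{C(p+1)(n-p)^2\epsilon}$, and rearranging yields exactly the claimed bound $\sqrt{\mathcal{L}(u_n)} \ge \frac{1}{C_S}\big(\mathrm{dist}(u^*,\mathcal{F}_p) - \sqrt{C(p+1)(n-p)^2\epsilon}\big)$.

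The main subtlety I anticipate is the step asserting $\tilde{u}_n \in \mathcal{F}_p$: one must check that truncating to $p$ of the terminal neuron functions indeed produces a network of the same architectural type with $p$ neurons in the last hidden layer, i.e., that $\tilde\phi_1,\dots,\tilde\phi_p \in \mathcal{H}_L$ and hence $\tilde u_n$ lies in $\mathcal{F}_p$ as defined. This is immediate from the definition of $\mathcal{F}_n$ because the $\phi_j$ are chosen independently from $\mathcal{H}_L$, so any $p$-element subset gives a valid member of $\mathcal{F}_p$; the reordering from \cref{thm::rpineq} does not affect membership. A secondary point worth stating explicitly is that \cref{thm::rpineq} is applied with $f_j = \phi_j(\cdot;\theta)$ and $M_f = M_{u_n}$, so the hypotheses match verbatim. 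Everything else is a routine triangle-inequality-plus-Lipschitz argument, so I do not expect a genuine obstacle — the content is really in \cref{thm::rpineq} and the stability assumption, both of which are given.
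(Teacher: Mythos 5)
Your proposal is correct and follows essentially the same route as the paper's proof: invoke \cref{thm::rpineq} to produce a $p$-neuron approximant in $\mathcal{F}_p$, bound $\mathrm{dist}(u^*,\mathcal{F}_p)$ by the triangle inequality, and apply the stability condition \eqref{assmp} to the pair $(\mathcal{G}(u_n), f)$ before rearranging. The only difference is that you explicitly flag the membership $\tilde{u}_n\in\mathcal{F}_p$ as a point to verify, which the paper takes for granted.
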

\begin{proof}
	By \cref{thm::rpineq}, there exists a $u_p\in \mathcal{F}_p$ satisfying $\|u_p-u_n\|^2\leq C(p+1)(n-p)^2\epsilon$. 
	By the triangular inequality, 
	\begin{equation*}
		\begin{aligned}
			\text{dist}(u^*,\mathcal{F}_p)&\leq\|u^*-u_p\|\\
			&\leq\|u_p-u_n\|+\|u_n-u^*\|\\
			&\leq\sqrt{C(p+1)(n-p)^2\epsilon}+\|u_n-u^*\|\\
			&=\sqrt{C(p+1)(n-p)^2\epsilon}+\|\mathcal{G}^{-1}(\mathcal{G}(u_n))-\mathcal{G}^{-1}(f)\|\\
			&\leq \sqrt{C(p+1)(n-p)^2\epsilon}+C_S\|\mathcal{G}(u_n)-f\|.
		\end{aligned}
	\end{equation*}	
	gives the desired inequality.
\end{proof}
\begin{remark}
	The stability assumption \eqref{assmp} is reasonable for well-posed problems. For instance, in the function fitting problems, $\mathcal{G}=\mathcal{I}$, $C_S = 1$, and the loss function is  $\mathcal{L}(u) = \|u - f\|^2$. In the case using PINN to solve Poisson equation $-\Delta u=f$, $C_S = \|(-\Delta)^{-1}\|$ is uniformly bounded.
\end{remark}

\begin{remark}
When the $\epsilon$-rank of the neural network approximator $u_n\in\mathcal{F}_n$ is $r_\epsilon\left(M_{u_n}\right)=p$, the result of the theorem can be expressed as 
	\begin{equation*}
		\sqrt{\mathcal{L}(u_n)} \ge \frac{\text{dist}(u^*,\mathcal{F}_p)}{C_S}-O(\sqrt{\epsilon}).
	\end{equation*}
	Recall that $\epsilon$ is a fixed hyperparameter, and it is chosen sufficiently small, which yields the loss function has a lower bound in terms of $\text{dist}(u^*,\mathcal{F}_p)$.
	During the training process, to minimize the loss function $\mathcal{L}(u)$, there must be a decrease in $\text{dist}(u^*,\mathcal{F}_p)$, implying an increase in the $\epsilon$-rank of the neuron functions.
\end{remark}

\subsection{Numerical Validation}

We use two examples to intuitively explain the theorem \cref{thm::rpineq2}.
\begin{example}[Heat Equation, \cref{fig::fail}]\label{eg::heat}
	Consider the heat equation 
	\begin{equation*}
		u_t = 0.02 \Delta u,\quad \Omega = [-\pi,\pi]^2,
	\end{equation*}
	with $u_0(x,y) = \sin 5x \sin 5y$ and zero Dirichlet boundary condition. The analytic solution is $u(x,y,t) = e^{-t}\sin 5x\sin 5y$. 
	There are two different sampling settings: 
	\begin{itemize}
		\item[(i)] $n_1=1000,n_2=1000,n_3=50$;
		\item[(ii)]  $n_1=2500, n_2=10000, n_3=50$.
	\end{itemize}
	$n_1,n_2,n_3$ are the number of sampling points inside the domain, at the initial value and at the boundary, respectively. The network has a depth of $L=3$ and a width of $n=100$.
\end{example}
\begin{figure}[htbp]
	\centering
	\subfloat[failed settings]{\includegraphics[width=0.45\textwidth]{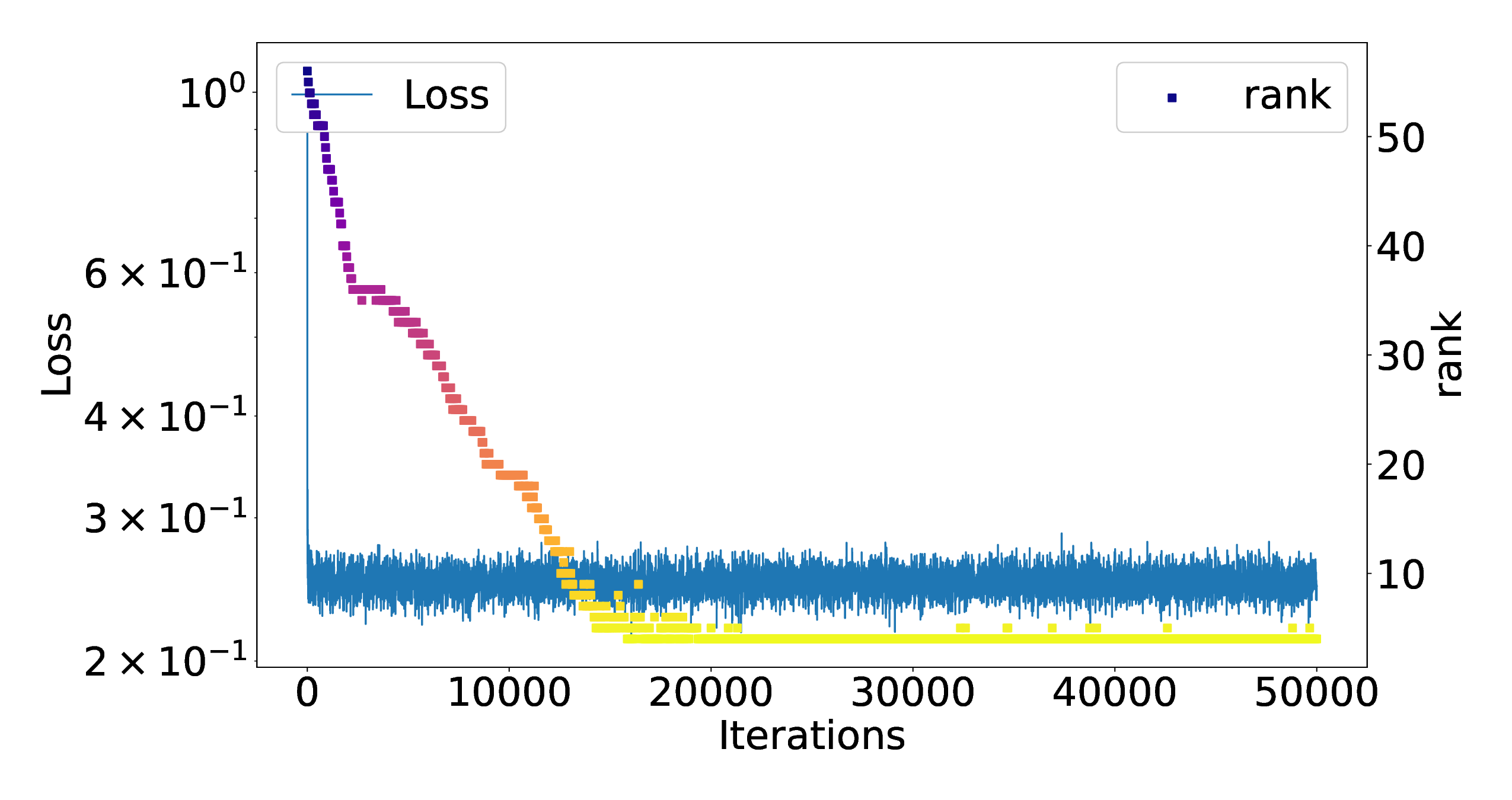}}
	\subfloat[trainable settings]{\includegraphics[width=0.45\textwidth]{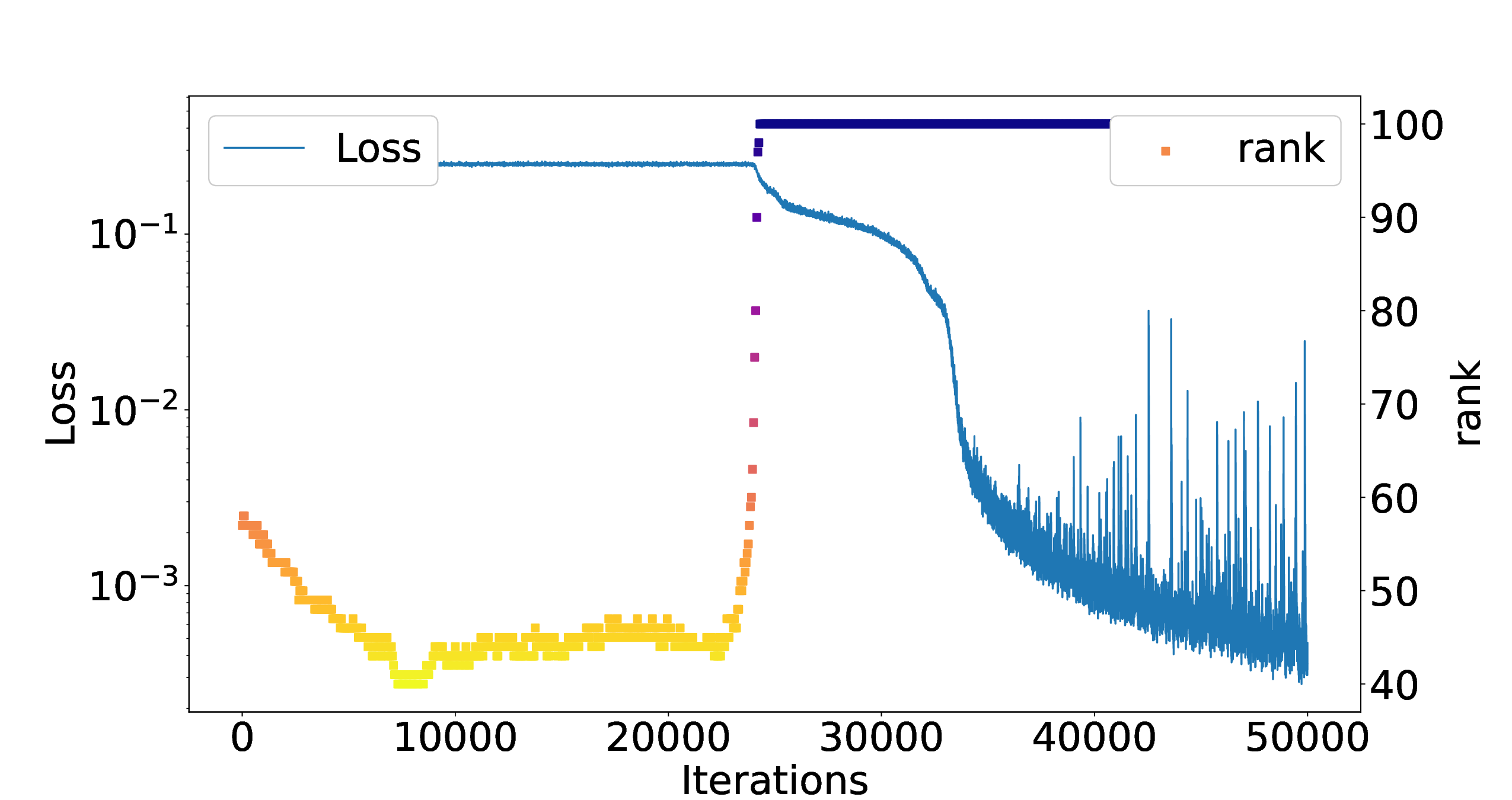}}
	\caption{(\cref{eg::heat}) The training process under different sampling settings. The left subfigure lacks sufficient number of samples on the domain and initial value while the right subfigure is trainable. The $\epsilon$-rank of the fail settings becomes low and the loss can not decrease. In the trainable setup, the staircase phenomenon occurred normally. }
	\label{fig::fail}
\end{figure}

In insufficient configurations, the $\epsilon$-rank becomes critically low, causing the loss function to remain at an elevated level and fail to decrease. When the $\epsilon$-rank $p$ is small, the approximation error $\text{dist}(u^*,\mathcal{F}_p)$ dominates the optimization landscape. Specifically, if the $\epsilon$-rank of the neuron functions does not decrease (i.e., the network fails to expand its representational capacity), the loss function $\mathcal{L}(u)$ remains trapped in a high-value plateau for an extended period. This phenomenon is typically observed during the initial stage or failure scenarios, where insufficient functional diversity limits the network's ability to approximate $u^*$. This explains why a sufficiently large $\epsilon$-rank, is a necessary condition for the loss function to decrease.

Next, we evaluate the $\epsilon$-rank in two methods that do not involve training process. Both extreme learning machine (ELM) \cite{huangExtremeLearningMachine2006b}  and random features \cite{rahimiRandomFeaturesLargescale2007} are widely used techniques in deep learning.
Notably, neither method involves training the hidden layers, offering computationally efficient solutions.
In these approaches, the linear independence of the neuron functions depends solely on the initialization and network structure employed. 

An important improvement is the partition of unity method (PoU) technique, employed in the random feature method (RFM) \cite{chenRandomFeatureMethod2023} and local extreme learning machine \cite{dongLocalExtremeLearning2021}. In RFM, the approximate solution is expressed as a linear combination of random features combined with the PoU, as follows:
\begin{equation*}
	u_R(x) = \sum_{i=1}^m \psi_i(x)\sum_{j=1}^{J_R} u_{ij}\phi_{ij}(x),
\end{equation*}
where $u_{ij}$ are unknown coefficients. In RFM, $N$ points $\{x_i\}_{i=1}^N$ are chosen from $\Omega$, typically uniformly distributed.
Then $\Omega$ is decomposed to $N$ disjoint subdomains $\{\Omega_i\}_{i=1}^N$ with $x_i\in\Omega_i$. For each $\Omega_i$, a PoU function $\psi_i$ is constructed with support $\Omega_i$, i.e., $\text{supp}(\psi_i) = \Omega_i$.
The extreme learning machine is modeled as 
\begin{equation*}
	u_E(x) = \sum_{j=1}^{J_E} u_j\phi_j(x),
\end{equation*}
which can be seen as a random feature method with no subdomains ($N=1$).

\begin{example}[Comparative $\epsilon$-rank study of RFM and ELM, \cref{fig::elm_rfm}]\label{eg::rfm}
	Consider a 2d approximation problem:
	\begin{equation*}
		u^*(x,y) = \cos x \cos y  + \cos 10x \cos 10y, \quad \Omega  = [-1,1]^2.
	\end{equation*}
In this example, we compare the performance of modeling with and without the partition of unity (PoU) technique, regarded as the RFM and ELM respectively.
The number of neurons is set to $n=900$. In the RFM, these $900$ neurons are divided into $3\times 3$ sub-intervals, i.e., $i = 9, J_R = 100$ and in the ELM, $J_E= 900$. 
The coefficients of the output layer in both methods are determined using the least squares method.
\end{example}

The results, presented in \cref{fig::elm_rfm}, demonstrate that, for the same number of neurons, the RFM achieves greater linear independence due to the compact support in each subdomain, resulting in higher accuracy. This example clearly illustrates that, under identical configurations, achieving a high linear independence through specific techniques can significantly enhance network performance. Furthermore, an approximate inverse proportional relationship between the $\epsilon$-rank and the error is observed. 

\begin{figure}[htbp]
	\subfloat[solution of ELM]{\includegraphics[width=0.33\textwidth]{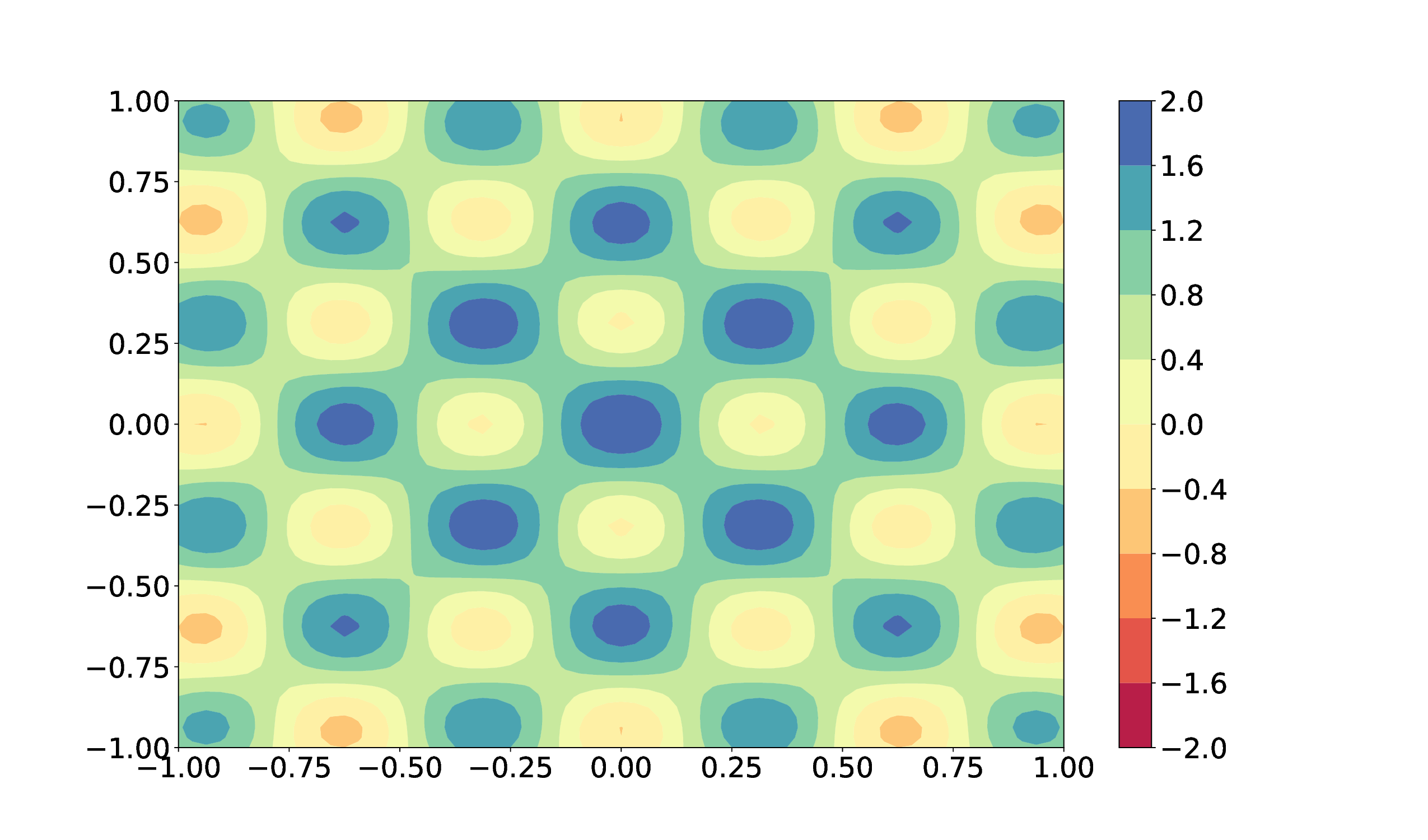}}
	\subfloat[solution of RFM]{\includegraphics[width=0.33\textwidth]{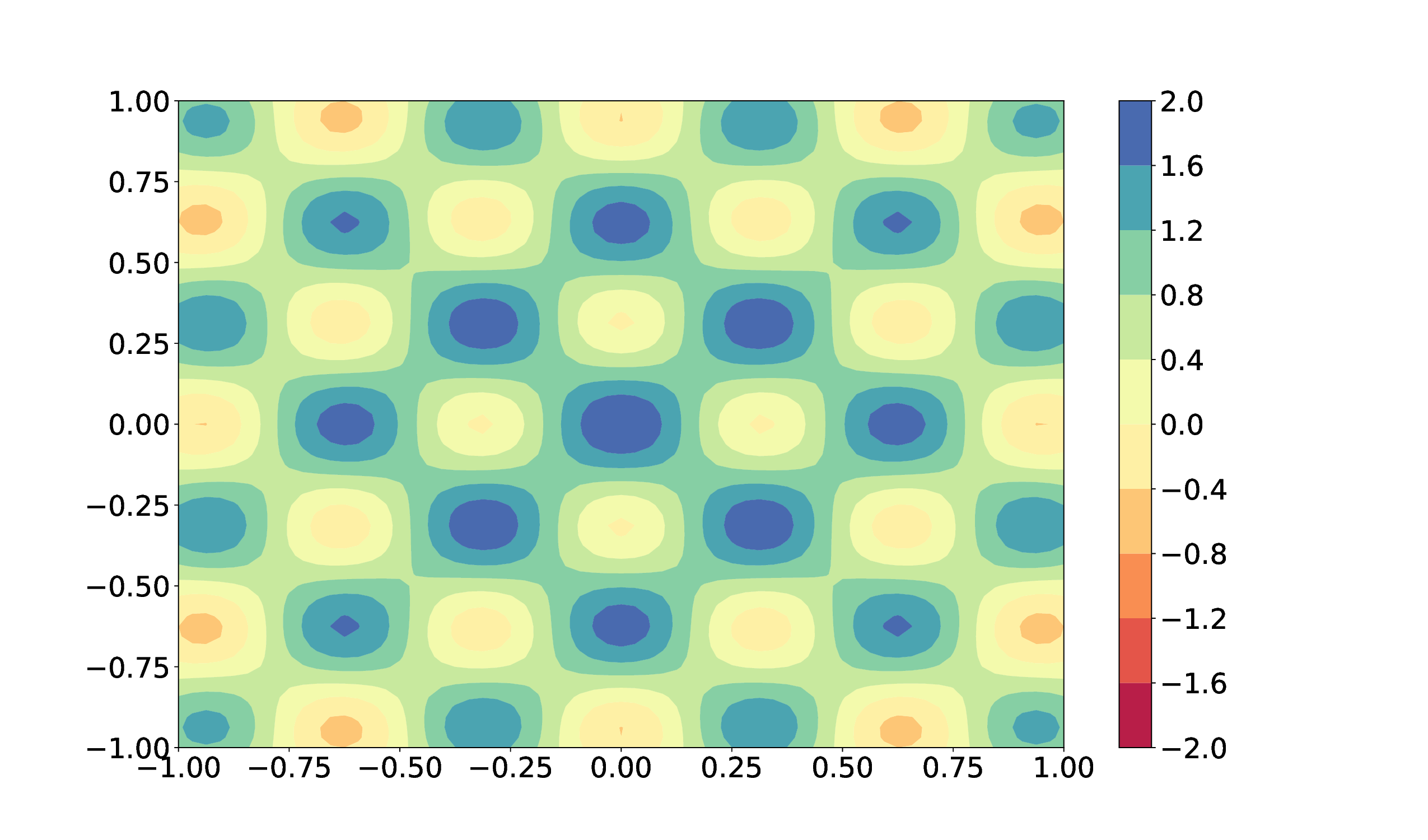}}
	\subfloat[exact solution]{\includegraphics[width=0.33\textwidth]{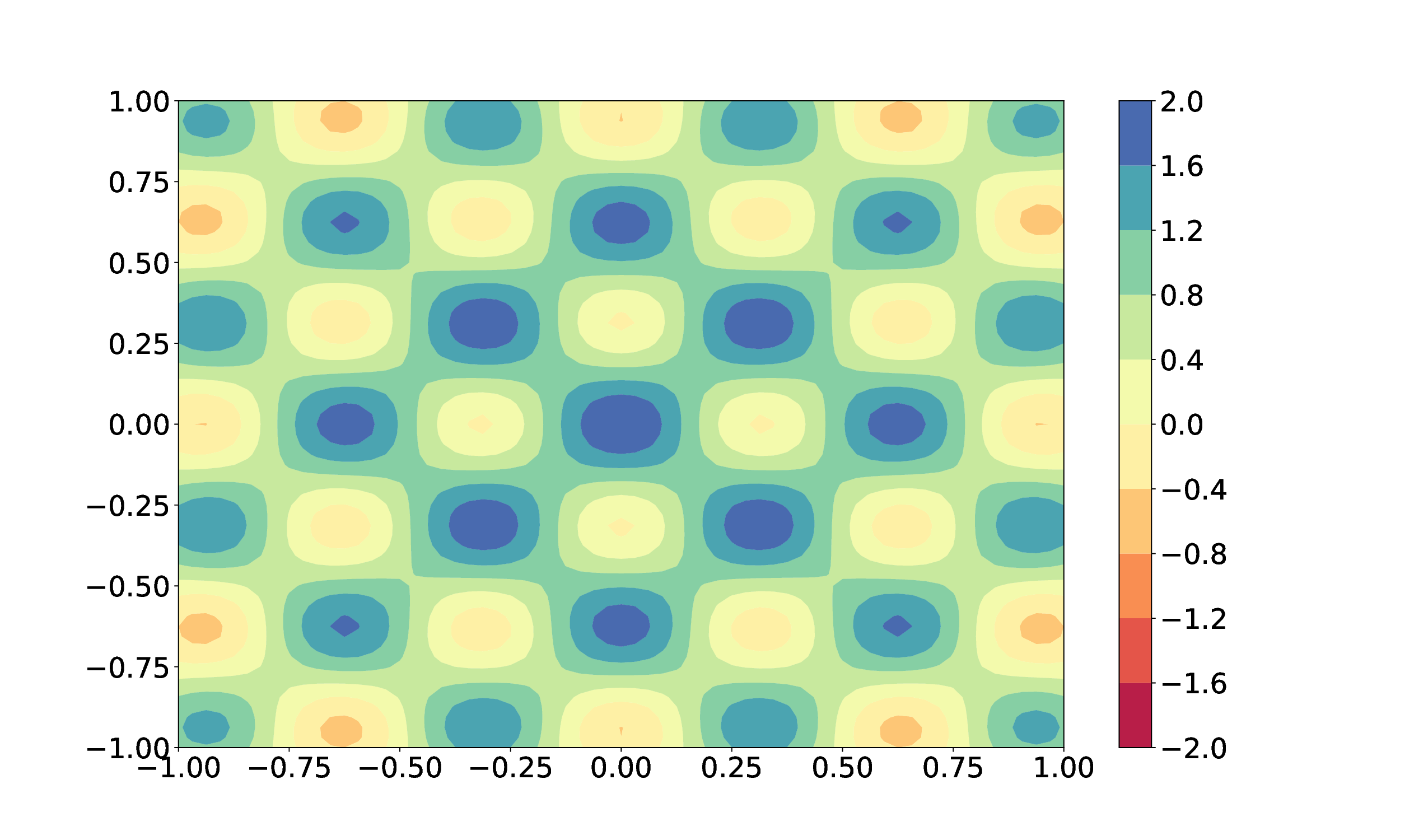}}
	
	\subfloat[error of ELM]{\includegraphics[width=0.33\textwidth]{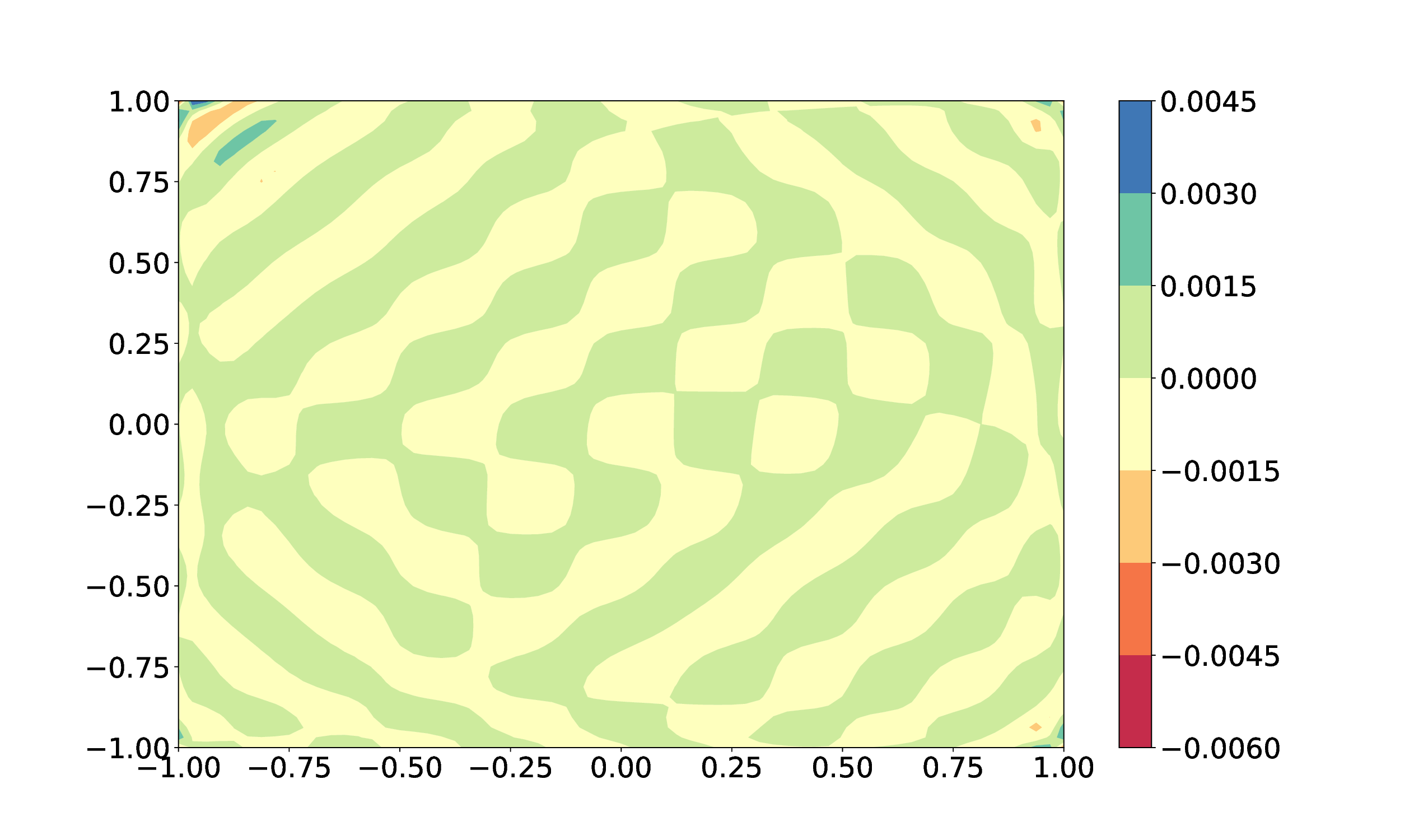}}
	\subfloat[error of RFM]{\includegraphics[width=0.33\textwidth]{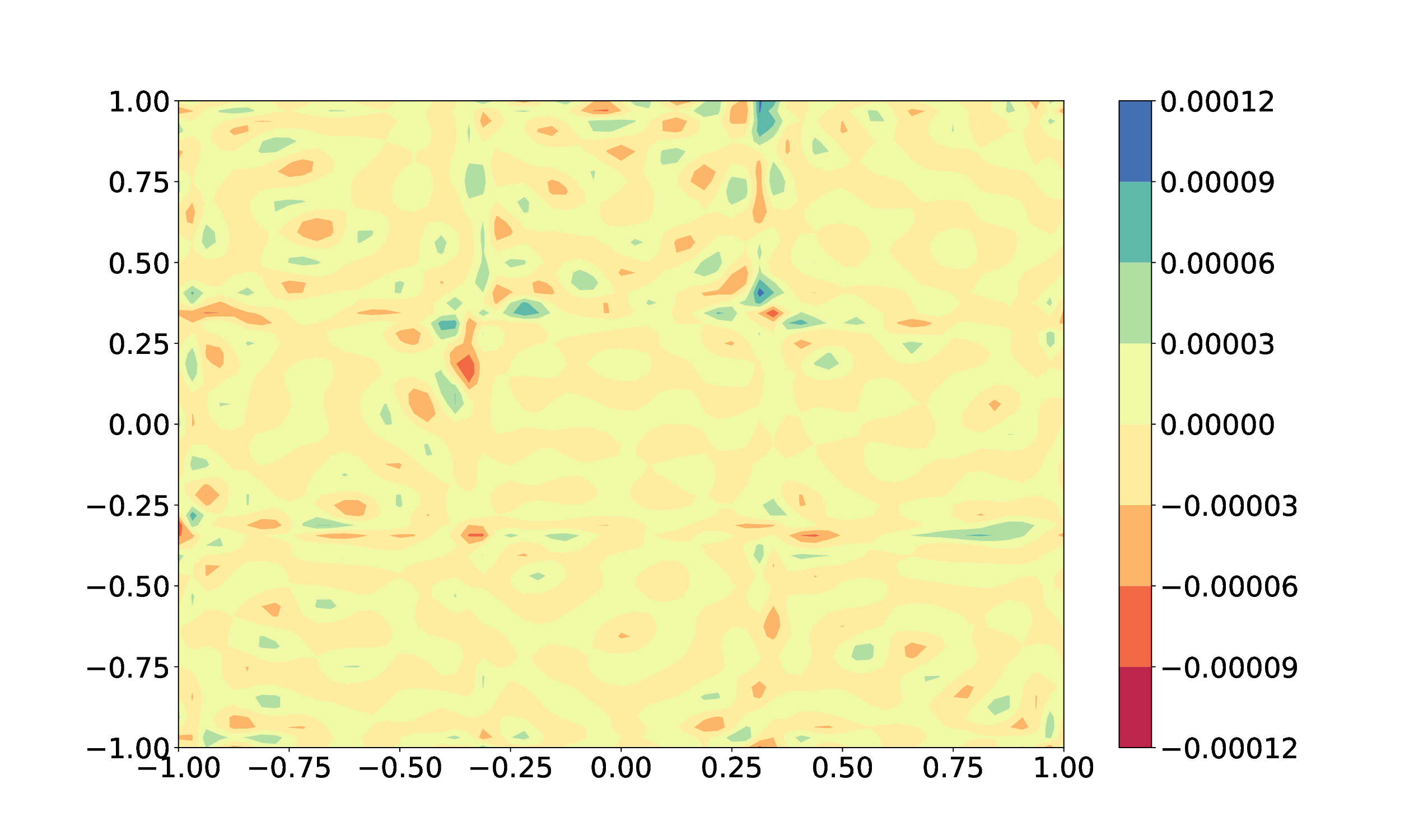}}
	\subfloat[rank and error]{\includegraphics[width=0.33\textwidth]{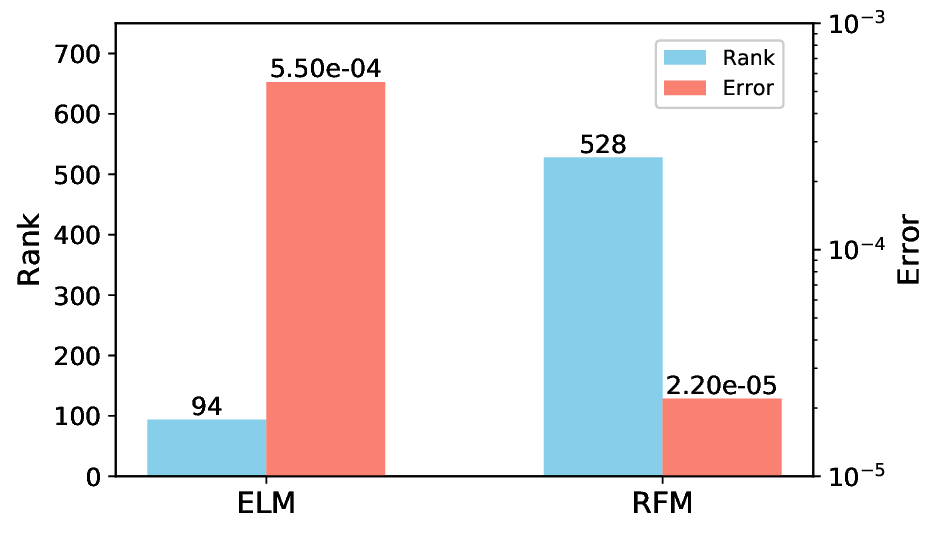}}
	\caption{(\cref{eg::rfm}) (a)-(e) are the solutions and point-wise errors of random feature method and extreme learning machine method. 
		The last subfigure (f) shows the rank and $L^2$ error of two methods.}
	\label{fig::elm_rfm}
\end{figure}

\section{A Pre-training Strategy to Increase $\epsilon$-rank}\label{sec::initialization}
\subsection{Initialization-Induced Rank Deficiency}

The initialization of neural network parameters critically determines the functional diversity of neuron representations. In deep learning, the initial loss plateau-a flat region in the loss landscape during early training—is widely observed. This phenomenon is closely linked to the $\epsilon$-rank deficiency of the initial neuron set, where the functional diversity remains limited to $O(1)$. Standard initialization schemes, such as the Xavier/Glorot method \cite{glorotUnderstandingDifficultyTraining2010}, typically produce neuron functions that approximate linear transformations of their inputs. Consequently, the initial neuron set fails to capture nonlinear target functions, necessitating gradient-based optimization to overcome this $\epsilon$-rank deficiency. This process can be interpreted as the network implicitly performing feature extraction during training.

To formalize this, consider a neuron $\phi(x) = \tanh(wx + b)$ with input $x \in [-1,1]$. The Taylor expansion around $x=0$ yields:
$$
\phi(x) = wx + b - \frac{(wx + b)^3}{3} + o((w+b)^3).
$$
Under Xavier initialization, where $w,b \sim U(-1/\sqrt{n}, 1/\sqrt{n})$, the linear term $wx + b$ dominates due to small weight magnitudes. Consequently, the neuron function $\phi(x) \sim wx + b$ approximates a linear map, and the $\epsilon$-rank of the neuron set $\{\phi_i\}_{i=1}^n$ remains approximately 2 post-initialization. This low rank directly causes the initial loss plateau, as the network lacks sufficient functional diversity to approximate nonlinear target functions.

Theoretical remedies exist to increase the $\epsilon$-rank by expanding the weight distribution (e.g., $w,b \sim U(-n,n)$), but practical implementations avoid this approach due to two critical issues: (1) gradient explosion caused by large weight magnitudes, and (2) activation saturation from $\sigma(wx+b)$ entering the saturation regions of the activation function (e.g., $\sigma(x)\to \pm 1$ for tanh), which results in vanishing gradients and loss of representational flexibility.
These phenomena, documented in \cite{glorotUnderstandingDifficultyTraining2010}, highlight the tension between functional richness and numerical stability in initialization design.

\subsection{$\epsilon$-rank Motivated Pre-training Strategy}

Pre-training in neural networks often refers to the process of initializing parameters to acquire broadly applicable features that facilitate subsequent training. While traditional initialization schemes (e.g., Xavier/Glorot initialization \cite{glorotUnderstandingDifficultyTraining2010}) aim to balance signal propagation, they often fail to explicitly address functional diversity: the capacity of neuron functions to span a rich representational space. This limitation is particularly evident in the early stages of training, where low $\epsilon$-rank restricts the network's ability to approximate nonlinear targets. To overcome this, we propose an $\epsilon$-rank motivated pre-training strategy that directly enhances functional diversity at initialization.

The following numerical experiments (\cref{fig::depth}) reveal a depth-dependent growth in $\epsilon$-rank: within the same deep neural network, the $\epsilon$-rank of the subsequent layer is higher than that of the previous layer. For a network with $L=4$ hidden layers and $n=50$ neurons per layer, we observe the $\epsilon$-rank increases monotonically with depth. The first hidden layer fundamentally differs from deeper layers, as its neuron functions are direct compositions of the activation function without intermediate transformations.

\begin{figure}[htbp]
	\centering
	\includegraphics[width=0.7\textwidth]{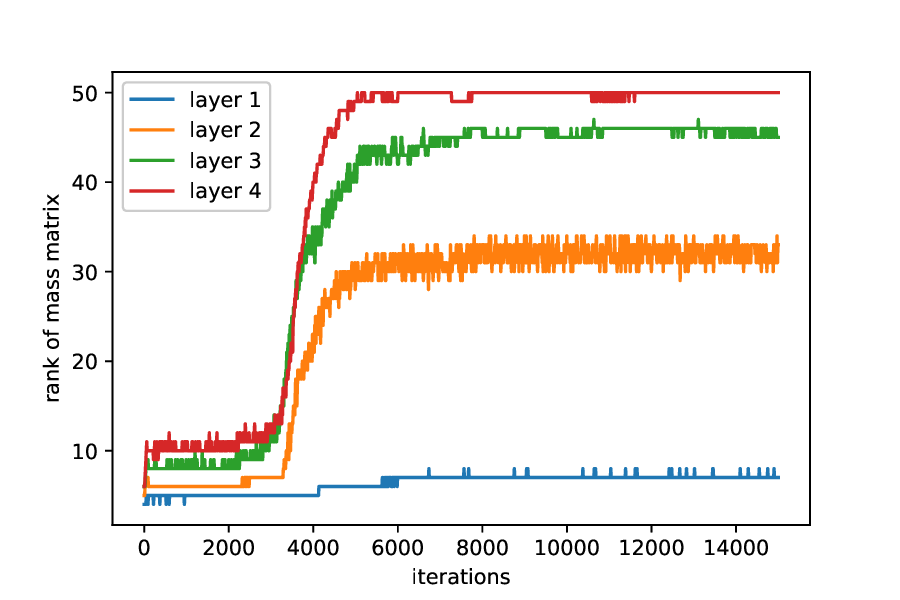}
	\caption{(\cref{eg::ff}) Layer-wise $\epsilon$-rank over training process}
	\label{fig::depth}
\end{figure}

Building on these observations, we propose a targeted pre-training strategy to enhance the $\epsilon$-rank of the first hidden layer. Inspired by basis functions in classical finite element methods, for $x\in[-1,1]$, we define:
\begin{equation}
(y_1)_j = \tanh\left(\frac{n}{2}(x - x_j)\right), \quad x_j = -1 + \frac{2(j-1)}{n}, \quad j=1,\ldots,n.
\label{eq::ut}
\end{equation}

This design ensures three key properties:
\begin{itemize}
    \item \textbf{Localized sensitivity}: Each neuron exhibits significant response within a narrow region of width $\sim 2/n$ around $x_j$, while its influence decays rapidly outside this interval. 
    \item \textbf{Domain-wide representation}: The uniform sampling of $x_j$ ensures that the set $\{x_j\}_{j=1}^n$ forms a dense grid over $[-1,1]$. This guarantees that any input $x \in [-1,1]$ lies within $O(1/n)$ of at least one center $x_j$, enabling localized approximation across the entire domain.
    \item \textbf{Linear independence}: The distinct centers $x_j$ ensure that the set of neurons is $\epsilon$-linearly independent in $L^2([-1,1])$. This independence guarantees an $\epsilon$-rank of $O(n)$ at initialization.
\end{itemize}

While effective in 1D settings, this strategy faces challenges in high dimensions due to the exponential growth of required samples for explicit grid construction.
To address this limitation, we draw inspiration from the random feature method proposed in \cite{zhangTransferableNeuralNetworks2024}, which avoids explicit grid construction by randomly sampling directions $a_j$ on the unit sphere $S^d$. This approach ensures uniform hyperplane density across the input space without requiring exponential sample growth. Building on this idea, we propose a uniform distribution initialization (UDI) pre-training strategy for deep networks, where each neuron in the first hidden layer is parameterized as:
\begin{equation}
	(y_1)_j = \tanh(\gamma(a_j\cdot x+b_j)),\quad j = 1,\cdots, n,
	\label{eq::udi}
\end{equation}
where $\gamma$ is the shape hyperparameter, $\{a_j\}_{j=1}^n$ are directions uniformly sampled on the unit sphere, i.e., 
\begin{equation*}
	a_j \in S^d:= \{a\in\mathbb{R}^d|\|a\|_2=1\},
\end{equation*} 
and $\{b_j\}_{j=1}^n$ are the distinct nodes uniformly sampled from the interval $[0,R]$ with $R$ determined by the domain $\Omega$.
This design bypasses the need for explicit grid construction while maintaining domain-wide representation through directional diversity.

By modifying only the parameters of the first hidden layer, our method avoids the challenges associated with reconfiguring deeper network architectures while preserving parameter efficiency. This approach ensures an $\epsilon$-rank of $O(n)$ from initialization, thereby achieving high functional diversity without increasing the total number of parameters. Numerical experiments demonstrate that this strategy effectively mitigates the initial loss plateau while maintaining numerical stability throughout training. These results align with the core principle of pre-training: to initialize the network with a rich set of diverse functions that accelerate convergence and generalize across tasks.

\subsection{Numerical Experiments}
\label{sec::numerical_pde}
This section systematically evaluates the effectiveness of our proposed pre-training strategy through three canonical experiments: one-dimensional function approximation, two-dimensional function approximation, and solving partial differential equations.

\begin{example}[One-dimensional function fitting, \cref{fig::initial}]\label{eg::ff_initial}
	The target function is also defined as \cref{eg::target}:
\begin{equation*}
	u(x) = \cos(x)+\cos(2x)+\cos(30x),\qquad x\in[-1,1].
\end{equation*}
The strategies are: (1) Deterministic initialization: the first layer is initialized as  
	$$
	(y_1)_j = \tanh\left(\frac{n}{2}(x - x_j)\right), \quad j = 1, \ldots, n,
	$$  
	where $\{x_j\}$ are uniform grid points on $[-1, 1]$; (2) Random Initialization: standard Xavier initialization.
The networks are chosen as a shallow one ($L=2,n=30$) and a deep one ($L=4, n=50$).
\end{example}
\begin{figure}[htbp]
	\centering
	\subfloat[$L=2,n=30$]{\includegraphics[width=0.8\textwidth]{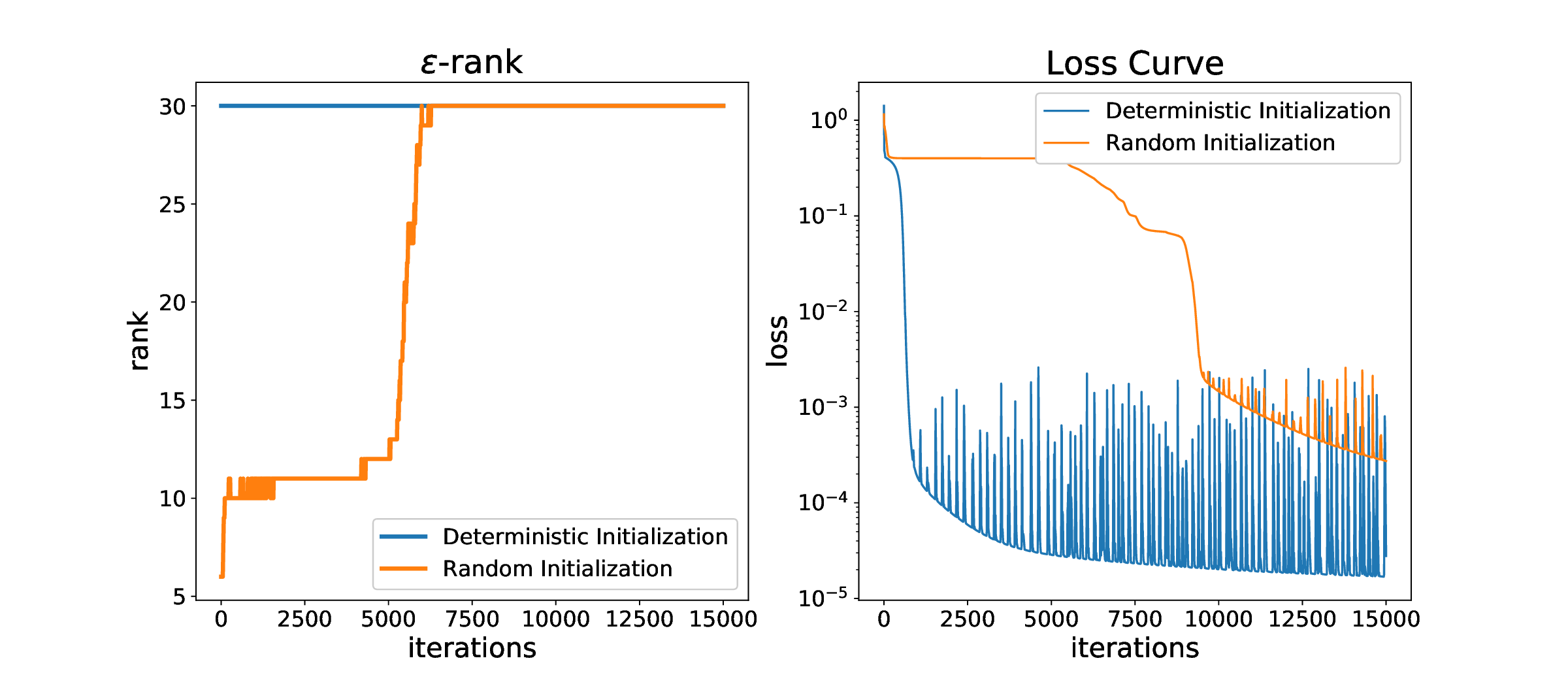}}

	\subfloat[$L=4,n=50$]{\includegraphics[width=0.8\textwidth]{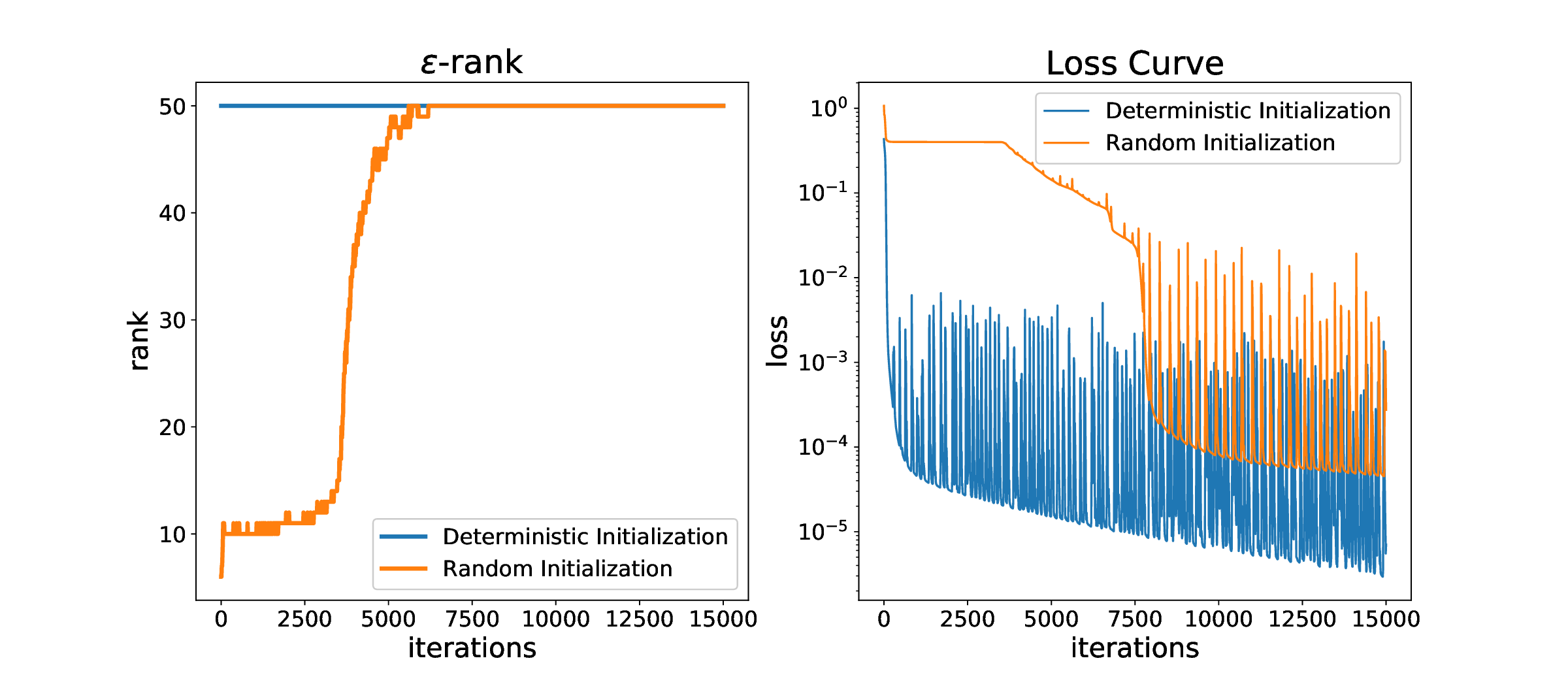}}
	\caption{(\cref{eg::ff_initial}) Training dynamics for one-dimensional function fitting under different initialization and network settings. 
		The left figure shows the evolution of $\epsilon$-rank and the right figure plots the losses. 
		The blue curve gives the result of deterministic initialization method and the orange curve is under default settings.}
	\label{fig::initial}
\end{figure}
\cref{fig::initial} demonstrates the superiority of proposed pre-training strategy.
The $\epsilon$-rank achieves $O(n)$ immediately, whereas random initialization remains low for over 5000 iterations.
This directly correlates with the loss reduction: deterministic initialization reaches $10^{-4}$ in 1000 iterations,
compared to the random initialization after 10,000 iterations.
Notably, comparing the blue curve in \cref{fig::initial}(a)  and the orange curve in \cref{fig::initial}(b), 
even a shallow and narrow network performs better than a deeper and wider network. These findings validate that principled initialization design can simultaneously reduce training time and enhance final accuracy.

\begin{example}[Two-Dimensional Function Fitting, \cref{fig::ladder_fun2d}]\label{eg::ladder_fun2d}
\label{eg::ff_2d}
We extend the experiment to the 2D function:
$$
u(x,y) = e^{-(x^2+y^2)}\sin(5x+5y),\quad (x,y)\in[-1,1]^2,
$$
using a network with $L=3$ hidden layers and $n=50$ neurons per layer. The initialization method for the first layer follows the uniform density initialization (UDI) described in \ref{eq::udi}.
\end{example}

The numerical results demonstrate that the pre-training strategy enforcing linear separation at the first hidden layer significantly accelerates convergence and improves accuracy. 
As shown in \cref{fig::ladder_fun2d}, the proposed method achieves a relative error reduction of over 90\% compared to the baseline approach within 1000 training iterations.
This improvement is directly correlated with the increase in the $\epsilon$-rank of the neuron functions.

\begin{figure}[htbp]
	\centering

\includegraphics[width=\textwidth]{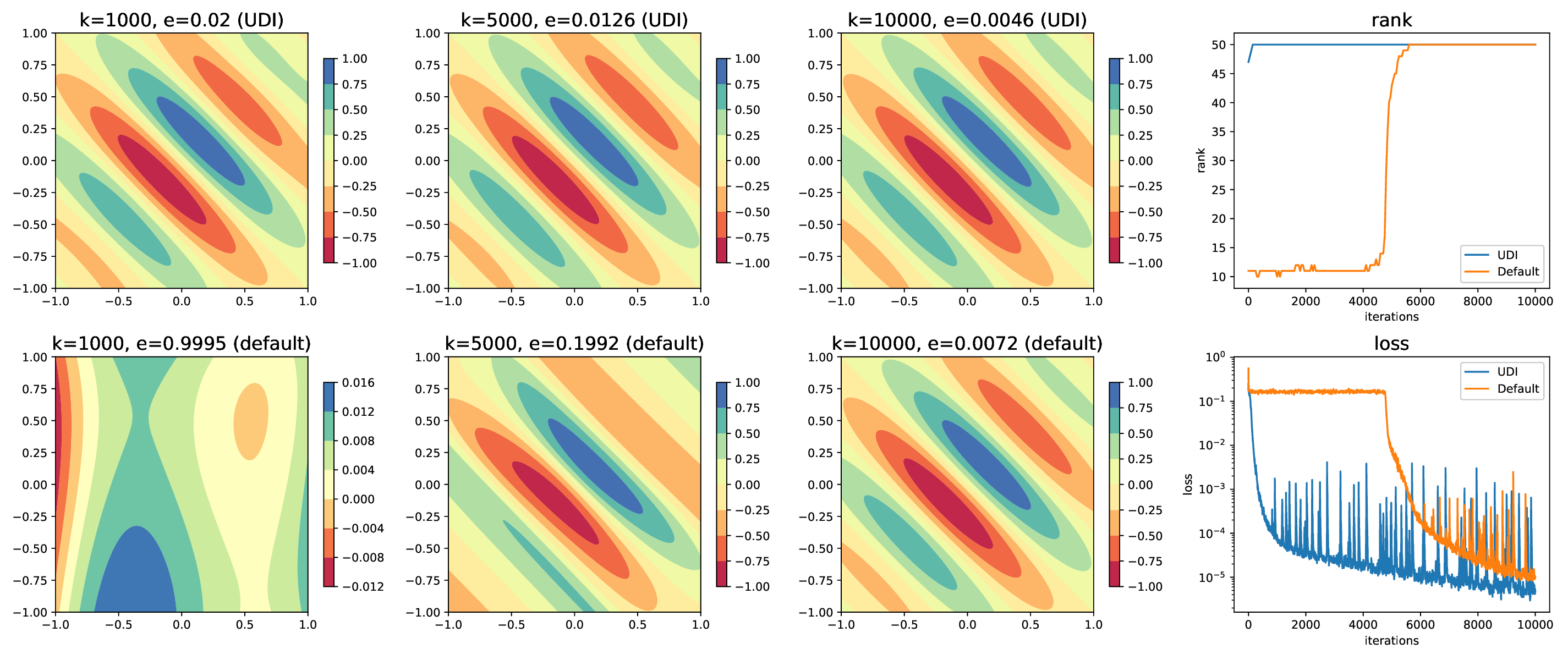}
	\caption{(\cref{eg::ff_2d}) 
	Training dynamics of 2-D function approximation using Xavier initialization (bottom row) and uniform density initialization (UDI, top row). From left to right, the first three columns show contour plots of the predicted solutions at 1000, 5000, and 10,000 training iterations, with relative errors marked.
	The final column displays the logarithmic loss curve and $\epsilon$-rank evolution.}
	\label{fig::ladder_fun2d}
\end{figure}

The performance advantage of the proposed pre-training strategy is observed across varying problem scales.
To further validate its effectiveness, we consider the following boundary value problem:
\begin{example}[Solving Possion's equation by PINN, \cref{fig::pde_ini}] \label{eg::pde}
	\begin{equation*}
		\left\{
		\begin{aligned}
			-\Delta u & = f, && x\in\Omega,\\
			u & = 0, && x\in\partial\Omega,
		\end{aligned}\right.
	\end{equation*}
	where $\displaystyle \Omega = \left[-\frac{\pi}{2},\frac{\pi}{2}\right]^2,\ f(x,y) = 32\sin4x\sin4y$. The analytic solution of this differential equation is $u(x,y)= \sin4x\sin4y$.
	The loss function is defined as 
	\begin{equation*}
		\mathcal{L}(u) = \|\Delta u + f\|_{L^2(\Omega)}^2 + \mu_{bc} \|u\|_{L^2(\partial\Omega)}^2.
	\end{equation*}
\end{example}

The experimental results in \cref{fig::pde_ini} demonstrate that the staircase phenomenon manifests in PINN-based PDE solving.
Under default initialization, the loss decreases slower, while in contrast, the UDI method achieves a faster and smaller loss.
This empirical evidence confirms the direct correlation between feature diversity and training efficiency in both function approximation and PDE solving.

\begin{figure}[htbp]
	\centering
	\subfloat[$n=50$]{\includegraphics[width=0.7\textwidth]{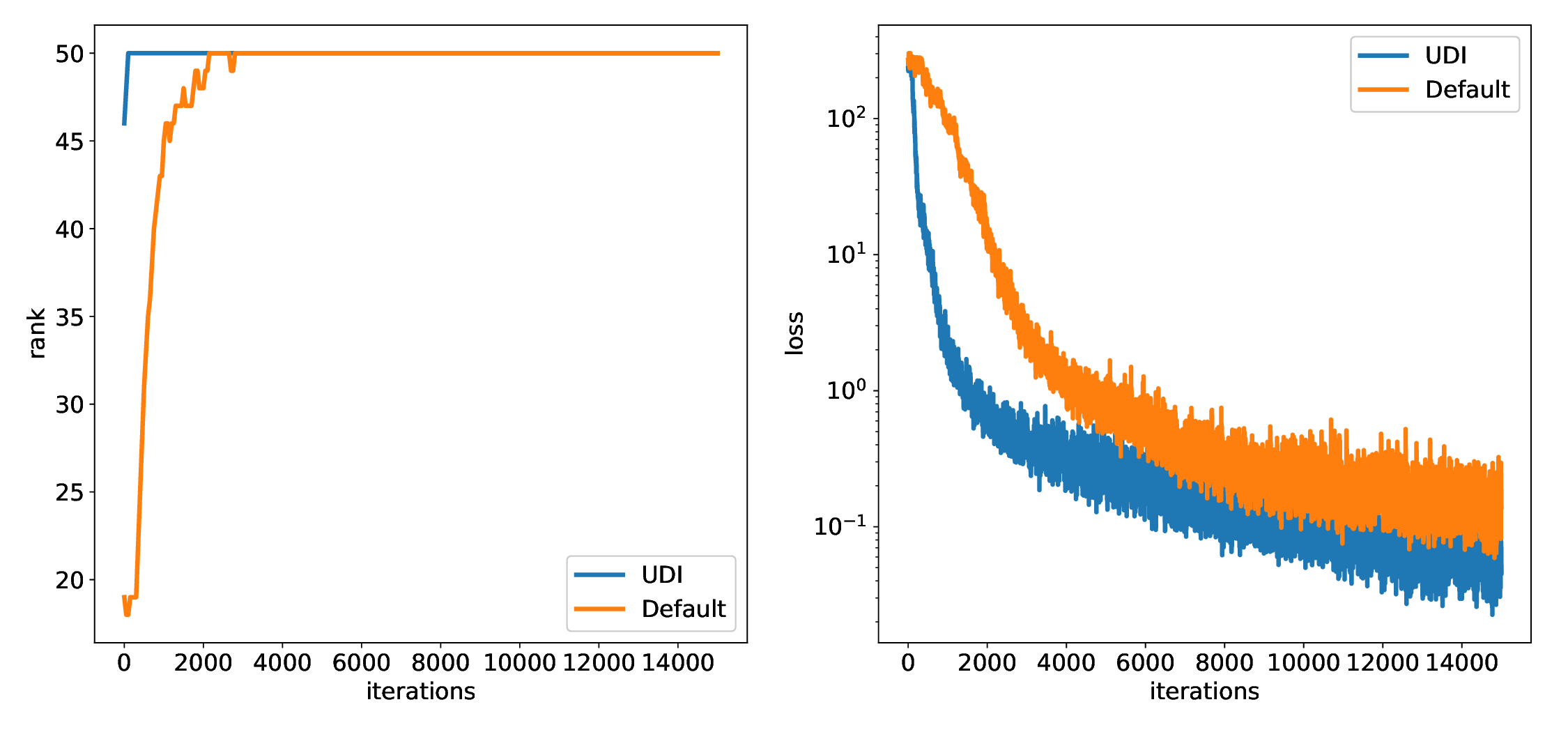}}

	\subfloat[$n=100$]{\includegraphics[width=0.7\textwidth]{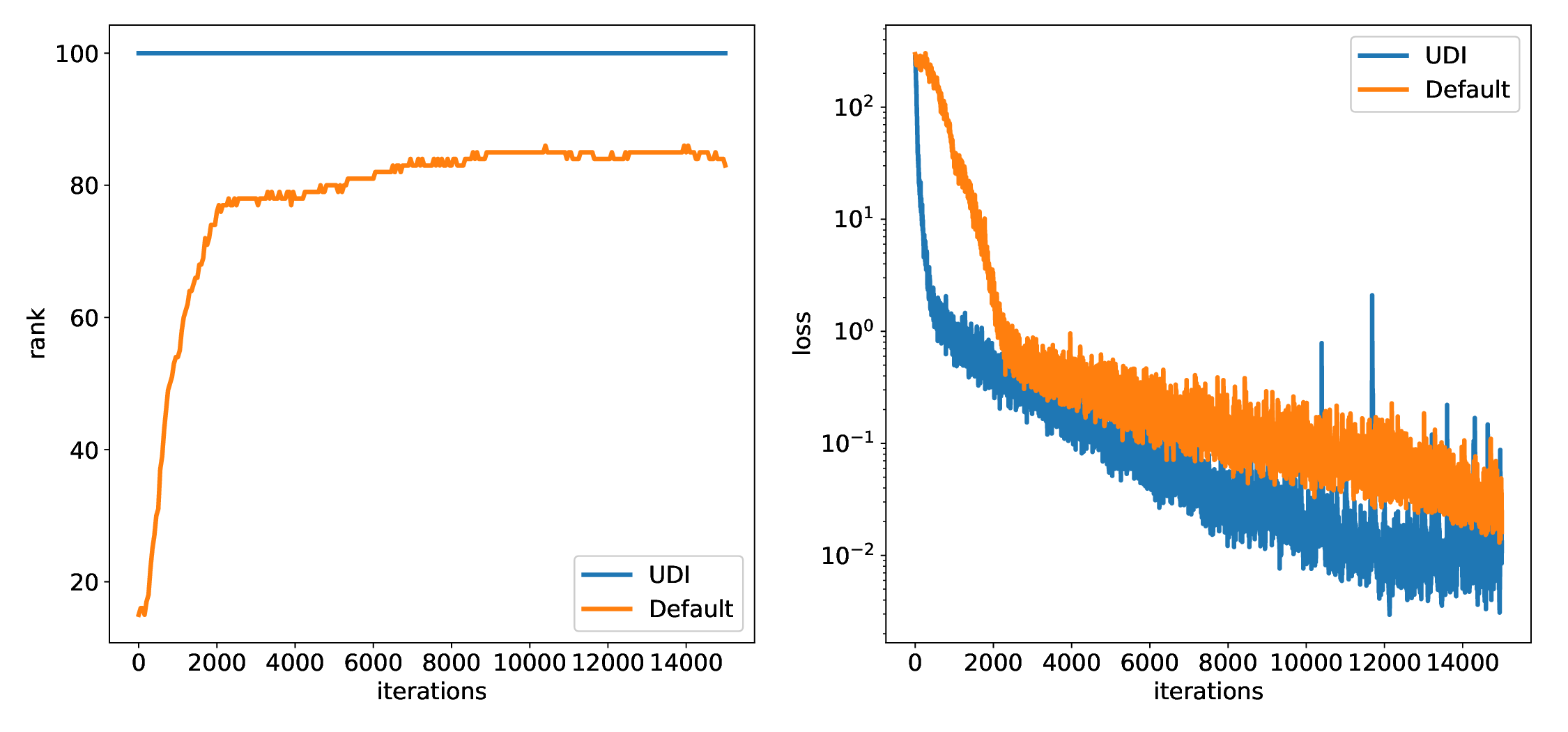}}
	\caption{(\cref{eg::pde})Training dynamics of PINN for the 2D Poisson equation with Xavier initialization (orange) and UDI (blue) under different network width $n=50$ and $n=100$.
		The left figure shows the $\epsilon$-rank and the right figure shows losses.}
	\label{fig::pde_ini}
\end{figure}

All numerical experiments demonstrate that a high $\epsilon$-rank method is critical for both function approximation and PDE solving, particularly in overcoming the early-training plateau.
These results validate that principled pre-training design can simultaneously reduce training time and enhance final accuracy without increasing model complexity or parameter count.

\section{Concluding Remarks}

In summary, this research provides a novel perspective on the training dynamics of 
deep neural networks by drawing connections to traditional numerical analysis. 
A key finding of our study is the identification of the \textit{staircase phenomenon}, which describes a stepwise increase in the linear independence of neuron functions during the training process, typically associated with rapid decreases in the loss function.
This finding highlights the importance of establishing a diverse and robust set of neuron functions in the early stages of training, which promotes both efficiency and convergence in model optimization.

Theoretical analyses and numerical experiments confirm that a set of linearly independent basis neuron functions is essential for effectively minimizing the loss function of neural networks.
The training process can be significantly accelerated by leveraging appropriate techniques, such as $\epsilon$-rank motivated pre-training strategy, constructing efficient network architectures, and employing proper domain partitioning.
These strategies have shown effective in promoting the linear independence of neuron functions, providing insight into the success of certain neural network approaches.

This study bridges the gap between neural network frameworks and traditional numerical methods, offering deeper insights into the mechanisms underlying deep learning.
It provides a solid foundation for future innovations in network initialization, training methodologies, and the design of more efficient models.

\section*{Acknowledgements}
This work is supported by the National Science Foundation of China (No.12271240, 12426312), the fund of the Guangdong Provincial Key Laboratory of Computational Science and Material Design, China (No.2019B030301001), and the Shenzhen Natural Science Fund (RCJC20210609103819018).

\appendix
\section{Numerical Settings}
This section summarizes the numerical settings for all numerical examples.

The neural network structure is defined as \cref{st::NN} with $\tanh$ activation function.
$L^2$ norm is used to measure errors between predicted solution $u$ and exact solution or reference solution $u^*$. 
\begin{equation*}
	e =  \|u  - u^*\|, \qquad \tilde{e} = \frac{\|u-u^*\|}{\|u^*\|},
\end{equation*}
where $\displaystyle \|u\| = \left(\int_\Omega u^2\mathrm{d}x\right)^\frac{1}{2}$ is the short note of $L^2$ norm.

In numerical calculations, we extend the $\epsilon$-linear independence to the discrete form.
Given $n$ functions discretized on $m$ nodes 
\begin{equation*}
	D =\begin{pmatrix}
		\phi_1(x_1) & \cdots & \phi_n(x_1)\\
		\phi_1(x_2) & \cdots & \phi_n(x_2)\\
		\vdots & \ddots & \vdots\\
		\phi_1(x_m) &\cdots &\phi_n(x_m)
	\end{pmatrix},
\end{equation*}
the Gram matrix is computed by $M = D^TWD$, where $W$ is the weight matrix.
The $\epsilon$-linear independence of these $n$ functions is 
\begin{equation*}
	r_\epsilon(M) = \#\{|\lambda(M)|>\epsilon\}.
\end{equation*}
where $|A|$ is the cardinal number of $A$, and the tolerance is given $\epsilon = 10^{-6}$ unless otherwise mentioned in \cref{tb::settings}.

In low-dimensional cases, $M$ is approximated using numerical integration, while in high dimensions, $M$ is approximated by Monte Carlo integration.
For simplicity, we generate one set of integration data points $\{x\}_{k=1}^m$, which can be Gaussian integral points or uniform mesh for $d=1,2$. Then 
\begin{equation}
	(M)_{ij}\approx  \sum_{k=1}^m w_k\phi_i(x_k)\phi_j(x_k), 
\end{equation}
where $w_k$ are the integral weights.
For example, $w_0=w_m = \frac{1}{2}$ and $w_i = 1, i=2,\cdots,m-1$ is the trapezoidal formulation.

The notations are listed in \cref{tb::notations}.
\begin{table}[htbp]
	\centering
	\caption{The List of Notations}
	\begin{tabular}{|c|l|}
		\hline
		Notation & Stands for ... \\
		\hline
		$L$ & Depth of hidden layers \\
		\hline
		$n$ & Width of hidden layers\\
		\hline
		$m$ & Sample size to calculate the Gram matrix\\
		\hline
		$d$ & Dimension of the input\\
		\hline
		$\sigma$ & Activation functions \\
		\hline
		$\phi$ & The neurons of the output layer\\
		\hline
		$\beta$ & Coefficients of the output layer\\
		\hline
		$M(\phi)$ & The Gram matrix of neuron functions $\{\phi_j\}_{j=1}^n$\\
		\hline
		$r_\epsilon(M)$ & The $\epsilon$-rank of the Gram matrix\\
		\hline
		$\epsilon$ & Tolerance of eigenvalues\\
		\hline
		$N$ & Sample size of training \\
		\hline
		$\mu$ & Weights for losses in PINNs\\
		\hline
		$\gamma$ & Shape parameter in UDI\\
		\hline
	\end{tabular}
	\label{tb::notations}
\end{table}

The settings for numerical examples are listed in \cref{tb::settings} and \cref{tb::hr}.
Activation functions ReLU, ELU, cosine and sigmoid are used in \cref{eg::ff}(b), the sigmoid activation function is 
used in \cref{eg::hr}, and tanh is used for all other examples. Except for the handwriting recognition task, the width of the hidden layers is fixed within each network in all other examples, meaning that every hidden layer in a given network contains the same number of neurons.

\begin{table}[htbp]
    \centering
    \caption{Numerical experiment settings in function fitting and solving PDEs}
    \begin{tabular}{@{}lcccccc@{}}
        \toprule
	 \textbf{Example} & $L$ & $n$ & $m$ & $N$ & $\epsilon$ & \textbf{Other Parameters} \\
        \midrule
		\multicolumn{7}{c}{\textbf{Function Fitting}}
		\\
        
        \midrule
        \cref{eg::ff}(a)       & 2/4      & 25/50     & 129         & 250           & $10^{-6}$        & - \\
        \cref{eg::ff}(b)       & 3      & 50     & 129         & 250           & $10^{-6}$        & varying $\sigma$  \\
        \cref{eg::ff}(c)       & 4      & 50     & 129         & 250           & $10^{-6}$        & - \\
        \cref{eg::ff_initial}       & 2/4      & 30/50     & 129         & 250           & $10^{-6}$        & - \\
        \cref{eg::ff_2d}       & 3        & 50        & $129^2$        & 550           & $10^{-6}$        & $\gamma=2$ \\
        \midrule
        \multicolumn{7}{c}{\textbf{PDE Solving}} \\
        \midrule
        \cref{eg::ac}        & 3        & 50         & $100^2$         & 250      & $10^{-8}$        & $\mu_{\text{bc}} = \mu_{\text{ic}}=1$ \\
        \cref{eg::heat}      & 3        & 100       & $50^3$         & 2050/12550  & $10^{-6}$        & $\mu_{\text{bc}} = \mu_{\text{ic}}=1$ \\
		\cref{eg::rfm}  & 1 & 900 & $65^2$ & $63^2$ & $10^{-12}$  & $3\times 3$ subdomains\\
        \cref{eg::pde}       & 2       & 50/100    & $129^2$        & 250     & $10^{-6}$        & $\gamma=n/50$, $\mu_{\text{bc}}=20$ \\
        \bottomrule
    \end{tabular}
	\label{tb::settings}
\end{table}
\begin{table}[htbp]
    \centering
    \caption{Numerical experiment settings in handwriting recognition}
    \begin{tabular}{@{}lccccc@{}}
        \toprule
		\multicolumn{6}{c}{\textbf{Handwriting Recognition}}\\
	
        \midrule
		 \textbf{Example} & $L$ & $\epsilon$ & epochs & batchsize & network width  \\
        
        \midrule
        \cref{eg::hr}      & 3  & $10^{-6}$     & 3         & 40           & $784\times 10\times 100\times 100\times 10$         \\
        \bottomrule
    \end{tabular}
	\label{tb::hr}
\end{table}

\bibliographystyle{plain}
\bibliography{ref}
\end{document}